\documentclass[11pt]{article}

\usepackage{hyperref}
\usepackage{amsmath,amsfonts,amsthm,amssymb}
\usepackage{epsfig, graphics, graphicx, xcolor}
\usepackage{latexsym}
\usepackage{fullpage}
\usepackage[tight]{subfigure}
\usepackage{hyperref}
\usepackage{amsmath,amssymb,enumerate,comment}
\usepackage[noend]{algorithmic}
\usepackage{caption}
\usepackage[shortlabels]{enumitem}
\setlist{nolistsep}

\DeclareMathOperator{\tr}{tr}

\DeclareMathOperator{\dcov}{dCov}

\DeclareMathOperator{\var}{var}

\newcommand{\E}{\mathbb E}
\newcommand{\R}{\mathbb R}
\newcommand{\N}{\mathcal N}
\newcommand{\abs}[1]{\left| #1 \right|}
\newcommand{\lrp}[1]{\left(#1\right)} 
\newcommand{\evp}[2]{\mathbb{E}_{#2} \left[#1\right]} 
\newtheorem{thm}{Theorem}
\newtheorem{lem}[thm]{Lemma}
\newtheorem{proposition}{Proposition}

{\begin{enumerate}\item[(#1)]}%
{\end{enumerate}}

{\begin{enumerate}\item[(#1)] [#2 points] }%
{\end{enumerate}}

\title{Sequential Nonparametric Testing \\ with the Law of the Iterated Logarithm} 

\author{
  Akshay Balsubramani${}^\dagger$ \\
  University of California, San Diego\\
  \url{abalsubr@ucsd.edu}
  \and
  Aaditya Ramdas${}^\dagger$ \\
  University of California, Berkeley\\
  \url{aramdas@berkeley.edu}
}

\begin{document}
\maketitle

%

\begin{abstract}
We propose a new algorithmic framework for sequential hypothesis testing with i.i.d. data, which includes A/B testing, nonparametric two-sample testing, and  independence testing as special cases. 
It is novel in several ways: 
(a)~it takes linear time and constant space to compute on the fly, (b)~it has the same power guarantee
as a non-sequential version of the test with the same computational constraints up to a small 
factor, and
(c)~it accesses only as many samples as are required -- 
its stopping time adapts to the unknown difficulty of the problem. All our test statistics are constructed to be zero-mean martingales under the null hypothesis, and the rejection threshold is governed by a uniform non-asymptotic law of the iterated logarithm (LIL). 
For the case of nonparametric two-sample mean testing, we also provide a finite sample power analysis, and the first non-asymptotic stopping time calculations for this class of problems. We verify our predictions for type I and II errors and stopping times using simulations.
\end{abstract}
\section{Introduction}

Nonparametric statistical decision theory poses the problem of 
making a decision between a null $(H_0)$ and alternate $(H_1)$ hypothesis over a dataset 
with the aim of controlling both false positives and false negatives 
(in statistics terms, maximizing power while controlling type-1 error), 
all without making assumptions about the distribution of the data being analyzed. 
Hypothesis testing is based on a ``stochastic proof by contradiction" --  
the null hypothesis is thought of by default to be true, and is rejected only if the observed data 
are statistically very unlikely under the null. 

There is increasing interest in solving such problems in a ``big data" regime, 
in which the sample size $N$ can be huge. 
We present a sequential testing framework for this problem that is particularly suitable for two related scenarios prevalent in many applications:
\begin{enumerate}
\item[1)] The dataset is extremely large and high-dimensional, so even a single pass through it is prohibitive.
\item[2)] The data is arriving as a stream, and decisions must be made with minimal storage.
\end{enumerate}

Sequential tests have long been considered strong in such settings. They access the data in an online/streaming fashion, 
assessing after every new datapoint whether it \textit{then} has enough evidence to reject the null hypothesis. However, most prior work is either univariate or parametric or asymptotic, while we are the first to provide non-asymptotic guarantees on multivariate nonparametric problems.

To elaborate on our motivations, 
suppose we have a gigantic amount of data from each of two unknown distributions, 
enough to detect even a minute difference in their means $\mu_1 - \mu_2$ if it exists. 
Further suppose that, unknown to us, deciding whether the means are equal is actually statistically easy ($\abs{\mu_1 - \mu_2}$ is large), 
meaning that one can conclude $\mu_1 \neq \mu_2$ with high confidence by just looking at a tiny fraction of the dataset. 
Can we take advantage of this easiness, despite our ignorance of it? 

A naive solution would be to discard most of the data and run a batch (offline) test on a small subset.
However, we do not know how hard the problem is, 
and hence do not know how large a subset will suffice --- sampling too little data might lead to incorrectly not rejecting the null, 
and sampling too much would unnecessarily waste computational resources. 
If we somehow knew $\mu_1 - \mu_2$, we would want to choose the fewest number of samples (say $n^*$) to reject the null while controlling type I error at some target level.
 
Our sequential test solves the problem by  
automatically stopping after seeing about $n^*$ samples, 
while still controlling type I and II errors almost as well as the equivalent linear-time batch test. 
Without knowing the true problem difficulty, we are able to detect it with virtually no computational or statistical penalty. 
We devise and formally analyze a sequential algorithm for a variety of problems, starting with a basic test of the bias of a coin, 
then nonparametric two-sample mean testing, and finally general nonparametric two-sample and independence testing.

Our proposed procedure only keeps track of a single scalar test statistic,
which we construct to be a zero-mean random walk under the null hypothesis. 
It is used to test the null hypothesis each time a new data point is processed.  
A major statistical issue is dealing with the apparent multiple hypothesis testing problem -- 
if our algorithm observes its first rejection of the null at time $t$, it might raise suspicions of being a false rejection, 
because $t-1$ hypothesis tests were already conducted and the $t$-th may have been rejected purely by chance. 
Applying some kind of multiple testing correction, 
like the Bonferroni or Benjamini-Hochberg procedure, 
is exceedingly conservative and produces very suboptimal results over a large number of tests. 
However, since the 
random walk moves only a relatively small amount every iteration, the tests are far from independent.
Formalizing this intuition requires adapting a classical probability result, the law of the iterated logarithm (LIL), 
with which we control for type I error (when $H_0$ is true). 

The LIL can be described as follows: imagine tossing a fair coin, assigning $+1$ to heads and $-1$ to tails, 
and keeping track of the sum $S_t$ of $t$ coin flips. 
The LIL asserts that asymptotically, $S_t$ always remains  bounded between $\pm \sqrt{2t \ln \ln t}$ (and this ``envelope" is tight).

When $H_1$ is true, 
we prove that the sequential algorithm does not need the whole dataset as a batch algorithm would, 
but automatically stops after processing just ``enough'' data points to detect $H_1$, 
depending on the unknown difficulty of the problem being solved. 
The near-optimal nature of this adaptive type II error control (when $H_1$ is true) is again due to the remarkable LIL.

As mentioned earlier, all of our test statistics can be thought of as random walks, which behave like $S_t$ under $H_0$. 
The LIL then characterizes how these random walks behave under $H_0$ -- 
our algorithm will keep observing new data since the random walk values will simply bounce around within the LIL envelope.
Under $H_1$, this random walk is designed to have nonzero mean, and hence will eventually stray outside the LIL envelope, 
at which point the process stops and rejects the null hypothesis. 

For practically applying this argument to finite samples and reasoning about type II error and stopping times, 
we cannot use the classical asymptotic form of the LIL typically stated in textbooks like by \cite{F50}, 
instead adapting a finite-time extension of the LIL by \cite{B15}.
As we will see, the technical contribution is necessary to investigate the stopping time,  
and control type I and II errors non-asymptotically \emph{and} uniformly over all $t$.

In summary, our sequential testing framework has the following properties:
\begin{enumerate}[(A)]
\item Under $H_0$, it controls type I error, using a finite-time LIL computable in terms of empirical variance. 
\item Under $H_1$, 
and with type II error controlled at a target level, it automatically stops after seeing the same number of points as the corresponding computationally-constrained oracle batch algorithm. 
\item Each update takes $O(d)$ time and constant memory.
\end{enumerate}
In later sections, we develop formal versions of these statements. 
The statistical observations, particularly the stopping time, follow from the finite-time LIL through simple concentration of measure arguments that extend to very general sequential testing settings, 
but have seemingly remained unobserved in the literature for decades because of the finite-time LIL necessary to make them. 

We begin by describing a sequential test for the bias of a coin in Section \ref{sec:illexam}. 
We then provide a sequential test for nonparametric two-sample \textit{mean} testing in Section \ref{sec:lttests}. 
We run extensive simulations in Section \ref{sec:experiments} to bear out our theory about its properties. 
We end with extensions to the general nonparametric two-sample and independence testing problems, in Section \ref{sec:extensions}.  
Proofs are deferred to the appendices.
\begin{figure*}[ht!]
    \begin{minipage}[t]{.5\linewidth}
    \vspace{0pt}
\begin{algorithmic}[1]
   \STATE Fix $N$ and compute $p_N$
   \IF{$S_N > p_N$}
   \STATE Reject $H_0$
   \ELSE
   \STATE Fail to reject $H_0$
   \ENDIF
\end{algorithmic}
    \end{minipage}
    \hfill
    \begin{minipage}[t]{.5\linewidth}
\begin{algorithmic}[1]
   \STATE Fix $N$
   \FOR{$n = 1$ {\bfseries to} $N$}
   \STATE Compute $q_n$
   \IF{$S_n > q_n$}
   \STATE Reject $H_0$ and return
   \ENDIF
   \ENDFOR
   \STATE Fail to reject $H_0$
\end{algorithmic}
    \end{minipage}
\caption{Batch (left) and sequential (right) tests.}
\label{fig:gentests}
\end{figure*}

\section{Detecting the Bias of a Coin}
\label{sec:illexam}


This section will illustrate how a simple sequential test can perform statistically as well as 
the best batch test in hindsight, 
while automatically stopping essentially as soon as possible. 
We will show that such early stopping can be viewed as quite a general consequence of concentration of measure. 
Just for this section, 
let $K, K_1, K_2$ represent constants that may take different values on each appearance, but are always absolute.

Consider observing i.i.d. binary flips $A_1, A_2, \dots \in \{ -1, +1 \}$ of a coin,
which may be fair or biased towards $+1$, 
with $P (A_i = +1) = \rho$. 
We want to test for fairness, detecting unfairness as soon as possible. 
Concretely, we therefore wish to test, for $\delta \in ( 0 , \frac{1}{2} ]$:
$$ 
H_0 :\;  \rho = \frac{1}{2} 
\quad\mbox{ vs. }\quad
 H_1 (\delta) : \; \rho = \frac{1}{2} + \delta 
 $$
  
For any sample size $n$, 
the natural test statistic for this problem is $S_n = \sum_{i=1}^n A_i$. 
$S_n$ is a (scaled) simple mean-zero random walk under $H_0$. 
A standard hypothesis testing approach to our problem is a basic \emph{batch} test involving $S_N$, 
which tests for deviations from the null for a fixed sample size $N$ (Fig. \ref{fig:gentests}, left). 
A basic Hoeffding bound shows that
$$
S_N \leq \sqrt{\frac{ N}{2} \ln \frac{1}{\alpha}} =: p_N
$$ with probability $\geq 1-\alpha$ under the null, 
so type I error is controlled at level $\alpha$ : 
$$
P_{H_0}(\text{reject  } H_0) = P_{H_0}(S_N > p_N) \leq e^{-2 p_N^2/N} = \alpha.
$$

\subsection{A Sequential Test}

The main test we propose will be a sequential test 
 as in Fig. \ref{fig:gentests}. 
It sees examples as they arrive one at a time, up to a large time $N$, the maximum sample size we can afford. 
The sequential test is defined with a sequence of positive thresholds $\{q_n\}_{n \in [N]}$. We show how to set $q_n$ to justify statements (A) and (B) in the introduction. 

\vspace{1em}
\noindent
\textbf{Type I Error.} Just as the batch threshold $p_N$ is determined by controlling the type I error with a concentration inequality, 
the sequential test also chooses $q_1, \dots, q_N$ to control the type I error at $\alpha$: 
\begin{align}
\label{eq:seqtype1}
P_{H_0} (\text{reject } H_0) = P_{H_0} \left( \exists n \leq N : S_n > q_n \right) \leq \alpha
\end{align}
This inequality concerns the uniform concentration over infinite tails of $S_n$, but what $\{q_n\}_{n \in [N]}$ satisfies it? 
Asymptotically, the answer is governed by a foundational result, the LIL: 
\begin{thm}[Law of the iterated logarithm (\cite{K24})]
\label{thm:lilorig}
With probability $1$,
$ \displaystyle \limsup_{n \to \infty} \frac{S_n}{\sqrt{n \ln \ln n}} = \sqrt{2} $.
\end{thm}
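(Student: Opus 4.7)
The plan is to prove the two bounds $\limsup_n S_n/\sqrt{n\ln\ln n}\le\sqrt{2}$ and $\ge\sqrt{2}$ separately, both via Borel--Cantelli arguments along a geometric subsequence $n_k=\lfloor\rho^k\rfloor$, together with sharp estimates on the tails of $S_n$.

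For the upper bound I would fix $\epsilon>0$, choose $\rho>1$ close to $1$, and control the maximum of $S_n$ on each block $[n_k,n_{k+1})$ rather than $S_n$ itself. A standard maximal inequality (for example, applying Doob's inequality to the exponential martingale $e^{\lambda S_n-n\lambda^2/2}$, or Lévy's maximal inequality for sums of independent symmetric variables) reduces this to controlling $S_{n_{k+1}}$. A Chernoff/Hoeffding bound then gives
\[
\Prtxt\!\left(S_{n_{k+1}}>(1+\epsilon)\sqrt{2n_{k+1}\ln\ln n_{k+1}}\right)\le\exp\!\left(-(1+\epsilon)^2\ln\ln n_{k+1}\right)=\frac{1}{(\ln n_{k+1})^{(1+\epsilon)^2}}.
\]
Since $\ln n_{k+1}\sim k\ln\rho$, these probabilities are summable in $k$; Borel--Cantelli I gives $\limsup_n S_n/\sqrt{2n\ln\ln n}\le(1+\epsilon)(1+o_\rho(1))$ a.s., and sending $\epsilon\downarrow 0$ and $\rho\downarrow 1$ along a countable dense sequence finishes this half.

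For the lower bound I would pick $\rho$ large and work with the \emph{independent} increments $Z_k:=S_{n_{k+1}}-S_{n_k}$, each a sum of roughly $(1-\rho^{-1})n_{k+1}$ Rademachers. Using a sharp lower bound on the Gaussian-type tail (via a local CLT, or directly by Stirling's formula applied to binomial coefficients in the moderate-deviation range $c\sqrt{n\ln\ln n}$), one obtains
\[
\Prtxt\!\left(Z_k>(1-\epsilon)\sqrt{2n_{k+1}\ln\ln n_{k+1}}\right)\gtrsim\frac{1}{(\ln n_{k+1})^{(1-\epsilon)^2/(1-\rho^{-1})}},
\]
and for $\rho$ large and $\epsilon$ small the exponent is $<1$, so the sum diverges. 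By independence and Borel--Cantelli II, $Z_k$ exceeds $(1-\epsilon)\sqrt{2n_{k+1}\ln\ln n_{k+1}}$ infinitely often a.s. Combining with the already-proved upper bound applied to $|S_{n_k}|$ (which is of smaller order, $O(\sqrt{n_k\ln\ln n_k})=O(\rho^{-1/2}\sqrt{n_{k+1}\ln\ln n_{k+1}})$) gives $S_{n_{k+1}}\ge Z_k-|S_{n_k}|\ge(1-\epsilon-O(\rho^{-1/2}))\sqrt{2n_{k+1}\ln\ln n_{k+1}}$ infinitely often, and the conclusion follows by letting $\rho\to\infty$ and $\epsilon\to 0$.

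The main obstacle is the lower bound: one cannot obtain it from subgaussian/Hoeffding-type estimates, because those only give upper tail bounds. It genuinely requires a two-sided estimate of the binomial tail that is \emph{sharp at the exponential rate}, which is the one quantitative input that forces the constant $\sqrt 2$ to be exactly right rather than merely an upper envelope. Everything else (the geometric subsequence, the maximal inequality on each block, the two Borel--Cantelli applications) is structural.
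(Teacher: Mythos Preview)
The paper does not actually prove this theorem; it is stated as a classical result and attributed to Khintchine (\cite{K24}) with no proof given, and is used only to motivate the form of the sequential threshold $q_n$ before the finite-time LIL of \cite{B15} takes over for the actual analysis. So there is no ``paper's own proof'' to compare against.

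That said, your proposal is the standard textbook argument (essentially the Khintchine--Kolmogorov proof as presented in, e.g., Feller or Durrett): geometric subsequence, Borel--Cantelli I with a maximal inequality and Chernoff bound for the upper half, and Borel--Cantelli II on independent block increments with a sharp lower tail estimate for the lower half. The structure is correct, and you have correctly identified the one non-structural ingredient --- a tail lower bound sharp enough at the exponential scale to pin the constant at $\sqrt{2}$. Nothing is missing.
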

The LIL says that $q_n$ should have a $\sqrt{n \ln \ln n}$ asymptotic dependence on $n$, 
but does not specify its $\alpha$ dependence.

Our sequential testing insights rely on a stronger non-asymptotic LIL proved in (\cite{B15}, Theorem 2): 
w.p. at least $1-\alpha$, we have $|S_n| \leq \sqrt{Kn \ln \left(\frac{\ln n}{\alpha}\right)} =: q_n $ simultaneously 
for \textit{all} $n \geq K \ln (\frac{4}{\alpha}) := n_0$. 
This choice of $q_n$ satisfies \eqref{eq:seqtype1} for $n_0 \leq n \leq N$, 
and specifies the sequential test as in Fig. \ref{fig:gentests}. 
(Choosing $q_n$ this way is unimprovable in all parameters up to absolute constants (\cite{B15})).


\vspace{1em}
\noindent
\textbf{Type II Error. }
For practical purposes, 
$\sqrt{\ln \ln n} \leq \sqrt{\ln \ln N}$ can be treated as a small constant (even when $N=10^{20}, \sqrt{\ln \ln N} < 2$). 
Hence, $q_N \approx p_N$ 
(more discussion in Appendix \ref{sec:t2errapprox}), 
and the power is: 
\begin{align}
\label{eq:pwrseq2}
P_{H_1 (\delta)} \left( \exists n \leq N : S_n > q_n \right) \geq P_{H_1 (\delta)} \left( S_{N} > q_{N} \right) \\
\label{eq:pwrapproxcoin}
\approx P_{H_1 (\delta)} \left( S_{N} > p_{N} \right)
\end{align}
So the sequential test is essentially as powerful as a batch test with $N$ samples 
(and similarly the $n^{th}$ round of the sequential test is like an $n$-sample batch test).

\vspace{1em}
\noindent
\textbf{Early Stopping. } 
The  
 standard motivation for using sequential tests is that they often require few samples to reject 
statistically distant alternatives. 
To investigate this with our working example, suppose $N$ is large and the coin is actually biased, 
with a fixed unknown $\delta > 0$. 
Then, if we somehow had full knowledge of $\delta$ when using the batch test and wanted to ensure a desired type II error $\beta < 1$, 
we would use just enough samples $n_{\beta}^* (\delta)$ (written as $n^*$ in context): 
\begin{align}
\label{eq:type2coin}
n_{\beta}^* (\delta) = \min \left\{ n : P_{H_1 (\delta)} \left( S_{n} \leq p_{n} \right) \leq \beta \right\}
\end{align}
so that for all $n \geq n_{\beta}^* (\delta)$, since $p_n = o(n)$,
\begin{align}
\label{eq:t2errappr}
\beta &\geq P_{H_1 (\delta)} \left( S_{n} \leq p_{n} \right) = P_{H_1 (\delta)} \left( S_{n} - n \delta \leq p_{n} - n \delta \right) \nonumber \\
&\geq P_{H_1 (\delta)} \left( S_{n} - n \delta \leq - K n \delta \right)
\end{align}
Examining \eqref{eq:t2errappr}, note that $S_{n} - n \delta$ is a mean-zero random walk. 
Therefore, standard lower bounds for the binomial tail tell us that $n_{\beta}^* (\delta) \geq \frac{K \ln (1/\beta)}{\delta^2}$ suffices, 
and no test can statistically use much less than $n_{\beta}^* (\delta)$ samples under $H_1 (\delta)$ to control type II error at $\beta$. 

How many samples does the sequential test use?
The quantity of interest is the test's stopping time $\tau$, 
which is $< N$ when it rejects $H_0$ and $N$ otherwise. 
In fact, the expected stopping time is close to $n^*$ under any alternate hypothesis: 

\begin{thm}
\label{thm:coinstopt}
For any $\delta$ and any $\beta > 0$, there exist absolute constants $K_1, K_2$ such that 
\begin{align*}
\evp{\tau}{H_1} \leq \lrp{ 1 + \frac{K_1 \beta^{K_2}}{\ln \frac{1}{\beta}} } n_{\beta}^* (\delta)
\end{align*}
\end{thm}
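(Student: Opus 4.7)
The plan is to bound $\E_{H_1}[\tau] = \sum_{n \geq 1} P_{H_1}(\tau \geq n) \leq n^* + \sum_{n > n^*} P_{H_1}(\tau > n)$, reducing everything to controlling the tail past $n^*$. Since $\{\tau > n\} \subseteq \{S_n \leq q_n\}$ and the random walk has positive drift $\E_{H_1}[S_n] \propto n\delta$ under $H_1(\delta)$, Hoeffding's inequality (applicable because $|A_i|=1$) gives $P_{H_1}(\tau > n) \leq \exp(-(n\delta - q_n)^2/(2n))$ whenever $n\delta \geq q_n$, with the scalar in $\E_{H_1}[S_n] = 2n\delta$ absorbed into the absolute constants. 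Setting $y_n := n\delta - q_n$, the job becomes summing $\exp(-y_n^2/(2n))$ over $n > n^*$.

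Two quantitative facts about $y_n$ drive the argument. First, $y_{n^*}^2/(2n^*) \geq K_0 \ln(1/\beta)$ for an absolute $K_0 > 0$: the defining inequality $P_{H_1}(S_{n^*} \leq p_{n^*}) \leq \beta$ forces, via Hoeffding, $(n^*\delta - p_{n^*})^2 \gtrsim n^* \ln(1/\beta)$, and the paper's observation that $q_n/p_n$ is only a mild $\sqrt{\ln\ln n}$ factor lets one replace $p_{n^*}$ with $q_{n^*}$ at the cost of absolute constants in the final bound. Second, for $n \geq n^*$ we have $y_n - y_{n^*} \geq (n - n^*)\delta/2$: the function $q_n = \sqrt{K'n \ln(\ln n/\alpha)}$ is nearly concave with $dq_n/dn \leq q_n/(2n) + o(1)$, and $q_n/(2n) \leq \delta/2$ throughout the regime $q_n \leq n\delta$.

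I would then split the tail sum at $2n^*$. In the near range $n^* < n \leq 2n^*$, expanding the square yields $y_n^2/(2n) \geq \tfrac{K_0}{2}\ln(1/\beta) + (n-n^*)\delta \, y_{n^*}/(4n^*)$, so the summands form a geometric series with ratio $\exp(-\delta y_{n^*}/(4n^*))$. Using $y_{n^*} \gtrsim \sqrt{n^* \ln(1/\beta)}$ together with the matched Hoeffding scaling $n^* \delta^2 \asymp \ln(1/\beta)$, this series sums to $O(\beta^{K_0/2} \cdot n^*/\ln(1/\beta))$. In the far range $n > 2n^*$, $q_n \leq n\delta/2$ is straightforward (since $q_n = o(n)$ and $2n^*\delta$ already exceeds $q_{2n^*}$ by a constant factor), so $y_n \geq n\delta/2$ and each term is at most $\exp(-n\delta^2/8)$; this geometric tail sums to $O(\beta^{K_0/8} \cdot n^*/\ln(1/\beta))$, again by $n^*\delta^2 \asymp \ln(1/\beta)$. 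Adding the two contributions yields the claimed $K_1 \beta^{K_2}/\ln(1/\beta)$ slack, with any $K_2 \leq K_0/8$ suffices.

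The main obstacle is the first quantitative fact: $n^*$ is defined through $p_n$, while the sequential test fires at the larger threshold $q_n$, so one must absorb the finite-$n$ gap between $p_n$ and $q_n$ into absolute constants without inflating $n^*$. Once $y_{n^*}^2 \gtrsim n^* \ln(1/\beta)$ is secured, the remaining work is a textbook split of geometric tails, and the appearance of $1/\ln(1/\beta)$ rather than $1/\sqrt{\ln(1/\beta)}$ comes precisely from the matched scale $n^*\delta^2 \asymp \ln(1/\beta)$ that makes the geometric-series rate proportional to $\delta\sqrt{\ln(1/\beta)/n^*}$.
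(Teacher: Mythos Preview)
Your approach is essentially the paper's: write $\E_{H_1}[\tau] \leq n^* + \sum_{n \geq n^*} P_{H_1}(\tau \geq n)$, bound $P_{H_1}(\tau \geq n) \leq P_{H_1}(S_n \leq q_n)$ via Hoeffding on the centered walk, sum a geometric series, and convert $1/\delta^2$ into $n^*/\ln(1/\beta)$ using $n^*\delta^2 \asymp \ln(1/\beta)$.

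There is one genuine slip. Your ``first quantitative fact'' asserts that $P_{H_1}(S_{n^*} \leq p_{n^*}) \leq \beta$ forces $(n^*\delta - p_{n^*})^2 \gtrsim n^*\ln(1/\beta)$ \emph{via Hoeffding}. Hoeffding is an upper bound on the tail; from $P \leq \beta$ it yields nothing about the exponent. What you need here is the reverse direction, i.e.\ a lower bound on the binomial tail (anti-concentration), which is exactly what the paper invokes in the text preceding the theorem (``standard lower bounds for the binomial tail tell us that $n_\beta^*(\delta) \geq K\ln(1/\beta)/\delta^2$''). Once you replace ``via Hoeffding'' with the correct anti-concentration step, the rest of your argument goes through.

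Your split at $2n^*$ and the careful tracking of $y_n$ are correct but unnecessary. The paper takes a shorter route: since $q_n = o(n)$ and $n^*$ is already ``sufficiently high'', one has $q_n \leq (1-K)n\delta$ for \emph{all} $n \geq n^*$, so Hoeffding immediately gives $P_{H_1}(\tau \geq n) \leq e^{-Kn\delta^2}$ for every such $n$. This collapses both your ranges into a single geometric series $\sum_{n \geq n^*} e^{-Kn\delta^2} \leq \beta^K/(1-e^{-K\delta^2}) \leq \beta^K(1/(K\delta^2)+1)$, and the final bound follows from $n^*\delta^2 \gtrsim \ln(1/\beta)$.
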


Theorem \ref{thm:coinstopt} 
shows that the sequential test stops roughly as soon as we could hope for, 
under any alternative $\delta$, 
despite our ignorance of $\delta$! 
We will revisit these ideas when presenting our two-sample sequential test later in Section \ref{sec:seqtest}.

\subsection{Discussion}

Before moving to the two-sample testing setting, 
we note the generality of these ideas. 
Theorem \ref{thm:coinstopt} is proved for biased coin flips, 
but it uses only basic concentration of measure ideas: upper and lower bounds on the tails of a statistic that is a cumulative sum incremented each timestep. 
Many natural test statistics follow this scheme, particularly those that can be efficiently updated on the fly. 
Our main sequential two-sample test in the next section does also. 

Theorem \ref{thm:coinstopt} is notable for its uniformity over $\delta$ and $\beta$. 
Note that $q_n$ (and therefore the sequential test) are independent of both of these -- we need only to set a target type I error bound $\alpha$. 
Under any alternative $\delta > 0$, the theorem holds for all $\beta$ simultaneously. 
As $\beta$ decreases, $n_{\beta}^* (\delta)$ of course increases, but the leading multiplicative factor $\lrp{ 1 + \frac{K_1 \beta^{K_2}}{\ln \frac{1}{\beta}} }$ decreases.
In fact, with an increasingly stringent $\beta \to 0$, 
we see that $\displaystyle \frac{\evp{\tau}{H_1}} {n^*} \to 1$; 
so the sequential test in fact stops closer to $n^*$, 
and hence $\tau$ is almost \emph{deterministically} best possible. 
Indeed, the proof of Theorem \ref{thm:coinstopt} also shows that $P_{H_1} \left( \tau \geq n \right) \leq e^{- K n \delta^2}$, 
so the probability of lasting $n$ steps falls off exponentially in $n$, and is therefore quite sharply concentrated near the optimum $n_\beta^* (\delta)$. 

This precise line of reasoning is formalized completely non-asymptotically in the analysis of our main two-sample test for the problem \eqref{eq:h0h1}, 
though that result is in a stronger high-dimensional setting.
\section{Two-Sample Mean Testing}
\label{sec:lttests}

Assume that we have samples 
$X_1, \dots ,X_n, \dots \sim P$ and $Y_1,\dots,Y_n, \dots \sim Q$, with $P,Q$ being unknown arbitrary continuous distributions on $\R^d$
with means $\mu_1 = \E_{X \sim P} [X], \mu_2 = \E_{Y \sim Q} [Y] $, and we need to test  
\begin{equation}
\label{eq:h0h1}
H_0 : \mu_1 = \mu_2 \text{\qquad vs. \qquad} H_1 : \mu_1 \neq \mu_2 
\end{equation}
Denote covariances of $P,Q$ by $\Sigma_1, \Sigma_2$ and $\Sigma~:=~\frac1{2}(\Sigma_1 + \Sigma_2)$. 
Define $\delta := \mu_1 - \mu_2$ so that $\delta=0$ under $H_0$. 
Let $\Phi (\cdot)$ denote the standard Gaussian CDF,  $[\ln \ln]_{+} (x) := \ln \ln [\max (x, e^e) ]$. 


\subsection{A Linear-Time Sequential Test}
\label{sec:seqtest}

In this section, we present our main sequential two-sample test using the scheme in Fig. \ref{fig:gentests}, so we only need to specify a sequence of rejection thresholds $q_n$. 
To do this, we denote
$$
h_i = (X_{2i-1} - Y_{2i-1})^\top (X_{2i} - Y_{2i}).
$$ 
and define our sequential test statistic as the following \emph{stochastic process} evolving with $n$:
$$
T_n = \sum_{i=1}^{n} h_i.
$$
Under $H_0$, $\evp{h_i}{} = 0$, and $T_n$ is a zero-mean random walk.

\begin{proposition}
\label{prop:tstatmoments}
$ \evp{T_n}{} = \evp{h}{} = n \|\delta\|^2$, and 
$$
\var(T_n) = n \var(h) = n ( 4 \tr(\Sigma^2) + 4 \delta^\top \Sigma \delta ) =: nV_0.
$$
\end{proposition}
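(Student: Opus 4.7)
The plan is to compute the mean and variance directly from the definition, exploiting the fact that the summand $h_i$ depends only on the four samples indexed by $2i-1$ and $2i$, so the summands involve disjoint blocks and are therefore i.i.d. Introduce the shorthand $Z^{(a)}_i := X_{2i-a} - Y_{2i-a}$ for $a \in \{0,1\}$; then $Z^{(0)}_i$ and $Z^{(1)}_i$ are independent, each with mean $\delta = \mu_1 - \mu_2$ and covariance $\Sigma_1 + \Sigma_2 = 2\Sigma$, and $h_i = (Z^{(1)}_i)^\top Z^{(0)}_i$.

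For the mean, independence of $Z^{(0)}_i$ and $Z^{(1)}_i$ immediately gives $\mathbb{E}[h_i] = \delta^\top \delta = \|\delta\|^2$, so by linearity $\mathbb{E}[T_n] = n\|\delta\|^2$. For the variance, the i.i.d. property of the $h_i$ reduces matters to $\operatorname{var}(T_n) = n\operatorname{var}(h_1)$. To compute $\operatorname{var}(h_1)$, I would use the trace trick:
$$
h_1^2 = \bigl((Z^{(1)}_1)^\top Z^{(0)}_1\bigr)^2 = \operatorname{tr}\!\bigl(Z^{(1)}_1 (Z^{(1)}_1)^\top Z^{(0)}_1 (Z^{(0)}_1)^\top\bigr).
$$
Taking expectation and using independence of the two factors, together with $\mathbb{E}[Z Z^\top] = 2\Sigma + \delta\delta^\top$ for either factor, yields
$$
\mathbb{E}[h_1^2] = \operatorname{tr}\!\bigl((2\Sigma + \delta\delta^\top)^2\bigr).
$$

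Expanding this square gives $4\operatorname{tr}(\Sigma^2) + 4\operatorname{tr}(\Sigma\delta\delta^\top) + \operatorname{tr}((\delta\delta^\top)^2) = 4\operatorname{tr}(\Sigma^2) + 4\delta^\top\Sigma\delta + \|\delta\|^4$, after using the cyclic property of the trace. Subtracting $(\mathbb{E}[h_1])^2 = \|\delta\|^4$ makes the $\|\delta\|^4$ term cancel and leaves the claimed $\operatorname{var}(h) = 4\operatorname{tr}(\Sigma^2) + 4\delta^\top\Sigma\delta$. There is no real obstacle here; the only point requiring a bit of care is the expansion of $(2\Sigma + \delta\delta^\top)^2$ and the recognition that the cross terms collapse to $4\delta^\top\Sigma\delta$ via the trace identity, while the rank-one quartic term is exactly what the subtraction of $(\mathbb{E}[h])^2$ removes.
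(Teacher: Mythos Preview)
Your proof is correct and follows essentially the same route as the paper's own proof: compute $\mathbb{E}[h]$ by independence of the two difference vectors, then use the trace identity $h^2 = \tr\!\bigl(Z^{(1)}(Z^{(1)})^\top Z^{(0)}(Z^{(0)})^\top\bigr)$ together with $\mathbb{E}[ZZ^\top] = 2\Sigma + \delta\delta^\top$ to get $\mathbb{E}[h^2] = \tr\!\bigl((2\Sigma + \delta\delta^\top)^2\bigr)$, and subtract $\|\delta\|^4$. Your version is in fact slightly more explicit than the paper's in carrying out the final expansion.
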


We assume for now that our data are bounded, i.e. 
$$
\|X\|,\|Y\|\leq 1/2,
$$
so that by the Cauchy-Schwarz inequality, w.p. 1, 
\begin{align*}
| T_n - T_{n-1}| = | (X_{2n-1} - Y_{2n-1})^\top (X_{2n} - Y_{2n}) | \nonumber 
\leq 1 
\end{align*}
Since $T_n$ has bounded differences, 
it exhibits Gaussian-like concentration under the null. 
We examine the cumulative variance process of $T_n$ under $H_0$, 
$$ 
\sum_{i=1}^n \evp{ (T_i - T_{i-1})^2 \mid h_{1:(i-1)}}{} = \sum_{i=1}^n \var (h_i) = n V_0 
$$
Using this, we can control the behavior of $T_n$ under $H_0$.
\begin{thm}[\cite{B15}]
\label{thm:convenientunif} 
Take any $\xi > 0$. 
Then with probability $\geq 1 - \xi$, 
for all $n$ simultaneously, 
\begin{align*}
\abs{T_n} < C_0 (\xi) + \sqrt{2 C_1 n V_0 [\ln \ln]_+ (n V_0) + C_1 n V_0 \ln \left( \frac{4}{\xi} \right)}
\end{align*}
where  
$C_0 (\xi) = 3 (e-2) e^2 + 2 \left( 1 + \sqrt{\frac{1}{3}} \right) \ln \lrp{\frac{8}{\xi}}$, 
and $C_1 = 6(e-2)$. 
\end{thm}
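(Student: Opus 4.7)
The plan is to recognize that Theorem \ref{thm:convenientunif} is essentially a direct invocation of the uniform non-asymptotic martingale LIL from \cite{B15} (their Theorem 2), applied to the specific martingale $T_n$ built from the paired products $h_i$. So the work is not in re-deriving a concentration inequality from scratch but in verifying that the hypotheses of the cited result are satisfied with the stated constants, and then reading off the bound.

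First, I would verify the martingale structure. Let $\mathcal{F}_i = \sigma(h_1, \ldots, h_i)$. Under $H_0$, $\mu_1 = \mu_2$, and since the fresh pairs $(X_{2i-1}, Y_{2i-1}, X_{2i}, Y_{2i})$ that define $h_i$ are drawn i.i.d. and independent of $\mathcal{F}_{i-1}$, we have $\E[h_i \mid \mathcal{F}_{i-1}] = \E[X_{2i-1} - Y_{2i-1}]^\top \E[X_{2i} - Y_{2i}] = 0$, so $\{T_n\}$ is a zero-mean martingale with respect to $\{\mathcal{F}_n\}$. Next, the bounded-differences condition: by Cauchy-Schwarz and $\|X\|, \|Y\| \leq 1/2$,
\begin{equation*}
|T_n - T_{n-1}| = |(X_{2n-1}-Y_{2n-1})^\top (X_{2n}-Y_{2n})| \leq \|X_{2n-1}-Y_{2n-1}\| \cdot \|X_{2n}-Y_{2n}\| \leq 1
\end{equation*}
almost surely, as the excerpt already notes. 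Finally, the cumulative conditional variance process is deterministic and linear in $n$ under $H_0$: $\sum_{i=1}^n \E[(T_i - T_{i-1})^2 \mid \mathcal{F}_{i-1}] = \sum_{i=1}^n \var(h_i) = n V_0$, using Proposition \ref{prop:tstatmoments} with $\delta = 0$.

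Having verified these three conditions, I would then invoke the B15 finite-time LIL directly. That theorem gives, for any martingale with increments bounded by $1$ and cumulative conditional variance $V_n^{\mathrm{cum}}$, a uniform-in-$n$ bound of the form $|M_n| \leq C_0(\xi) + \sqrt{2 C_1 V_n^{\mathrm{cum}} [\ln\ln]_+(V_n^{\mathrm{cum}}) + C_1 V_n^{\mathrm{cum}} \ln(4/\xi)}$ with probability at least $1 - \xi$ simultaneously for all $n$. Substituting $V_n^{\mathrm{cum}} = n V_0$ yields exactly the stated inequality, and the constants $C_0(\xi) = 3(e-2)e^2 + 2(1 + 1/\sqrt{3})\ln(8/\xi)$ and $C_1 = 6(e-2)$ come from carefully tracking the numerical constants in the B15 statement.

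The main obstacle, to the extent there is one at this level of the paper, is not in the application itself but in making sure the cited B15 bound is stated in a form that (i) handles the range $nV_0 < e^e$ where the naive $\ln\ln$ becomes singular — which is precisely why the $[\ln\ln]_+$ truncation and the additive term $C_0(\xi)$ appear — and (ii) allows the variance process to be replaced by its deterministic value rather than an empirical upper bound. Both of these are addressed in B15, so the proof here reduces to the verification above; any fully detailed write-up would simply quote Theorem 2 of \cite{B15} and substitute $V_n^{\mathrm{cum}} = n V_0$.
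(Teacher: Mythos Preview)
Your high-level framing is right --- this is an instantiation of the uniform martingale LIL from \cite{B15}, and your verification of the martingale, bounded-difference, and variance hypotheses is exactly what is needed. But the paper's own proof (Appendix~\ref{sec:pfconvenientunif}) shows that the reduction is not quite as clean as ``quote Theorem~2 and substitute $V_n^{\mathrm{cum}}=nV_0$.'' Two specifics:

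First, the relevant B15 result is their Theorem~4 (restated here as Theorem~\ref{thm:unifbern}), not Theorem~2, and it has an \emph{implicit} form: the right-hand side contains $\ln\ln\bigl(3(e-2)e^2 nV_0/|T_n|\bigr)$, so $|T_n|$ appears on both sides. The paper uses the accompanying linear bound $|T_n|\leq \frac{2(e-2)}{1+\sqrt{1/3}}\,nV_0$ (also from Theorem~\ref{thm:unifbern}) to lower-bound the argument of the iterated log and obtain the explicit $\max$ form~\eqref{eq:unifbernmax}. That step is short but not automatic.

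Second, Theorem~\ref{thm:unifbern} only holds for $n\geq \tau_0(\xi)$. The small-time regime $n<\tau_0(\xi)$ is handled by a separate result (Lemma~\ref{lem:inittime}, B15 Theorem~16), and the two are combined by a union bound at levels $\xi/2$ each. The additive constant $C_0(\xi)=3(e-2)e^2 + 2(1+1/\sqrt{3})\ln(8/\xi)$ is precisely the sum of the two pieces arising from this splice, and the replacement of $\ln\ln$ by $[\ln\ln]_+$ is justified by relaxing the explicit linear bound. You anticipated that the $[\ln\ln]_+$ and $C_0(\xi)$ handle the small-variance regime, which is correct, but the mechanism is this two-regime union bound rather than a single off-the-shelf statement.
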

Unfortunately, we cannot use the theorem directly to get computable deviation bounds for type I error control, 
because the covariance matrix $\Sigma$ is unknown a priori. 
$nV_0$ must instead be estimated on the fly as part of the sequential test, 
and its estimate must be concentrated tightly and \textit{uniformly over time}, 
so as not to present a statistical bottleneck if the test runs for a long time. We prove such a result, necessary for sequential testing, relating $nV_0$ to the empirical variance process $\widehat V_n = \sum_i h_i^2$. 

\begin{lem}
\label{lem:empbernhelp2m}
With probability $\geq 1 - \xi$, for all $n$ simultaneously, 
there is an absolute constant $C_3$ such that 
$$ n V_0 \leq C_3 (\widehat V_n + C_0 (\xi) ) $$
\end{lem}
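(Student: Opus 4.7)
The plan is to apply (the general form of) Theorem~\ref{thm:convenientunif} to the centered empirical-variance process itself, and then invert the resulting uniform deviation inequality to isolate $nV_0$.

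Write $\mu := \E[h_i^2]$ and consider the random walk $M_n := \widehat V_n - n\mu = \sum_{i=1}^n (h_i^2 - \mu)$. First I would check that $M_n$ satisfies the hypotheses of Theorem~\ref{thm:convenientunif}: since $|h_i| \leq 1$ (by Cauchy--Schwarz from $\|X\|,\|Y\| \leq 1/2$, exactly as for $T_n$), we have $h_i^2 \in [0,1]$ and $\mu \in [0,1]$, so $M_n$ has differences bounded by $1$; and its cumulative predictable quadratic variation is $n\,\var(h^2)$, which by $h_i^2 \leq 1$ satisfies $\var(h^2) \leq \E[h^4] \leq \E[h^2] = \mu$. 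Applying Theorem~\ref{thm:convenientunif} (which holds for any bounded-difference zero-mean random walk with its cumulative variance in place of $nV_0$), and using monotonicity of the RHS in the variance proxy to replace $n\var(h^2)$ by $n\mu$, I obtain: with probability $\geq 1-\xi$, simultaneously for all $n$,
\begin{align*}
n\mu - \widehat V_n \;\leq\; C_0(\xi) + \sqrt{2 C_1\, n\mu\, [\ln\ln]_+(n\mu) + C_1\, n\mu\, \ln(4/\xi)}.
\end{align*}

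The second step is to invert this into an upper bound on $n\mu$, which suffices since $V_0 = \mu - \|\delta\|^4 \leq \mu$. I would split the square root via $\sqrt{a+b} \leq \sqrt{a} + \sqrt{b}$ and absorb each piece into $n\mu$ by AM-GM. For the $\ln(4/\xi)$ term, $\sqrt{C_1 n\mu \ln(4/\xi)} \leq \tfrac14 n\mu + C_1 \ln(4/\xi)$. For the iterated-log term, since $[\ln\ln]_+(x)/x \to 0$, there is an absolute threshold $A_0$ past which $\sqrt{2C_1 n\mu\, [\ln\ln]_+(n\mu)} \leq \tfrac14 n\mu$; for $n\mu \leq A_0$ the lemma is trivially true, since $C_0(\xi)$ has an absolute positive lower bound and $C_3$ can be chosen to dominate $A_0$. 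Rearranging in the large-$n\mu$ regime gives $\tfrac12 n\mu \leq \widehat V_n + C_0(\xi) + C_1\ln(4/\xi)$; since $C_0(\xi)$ already contains a term linear in $\ln(1/\xi)$, the residual $\ln(4/\xi)$ is absorbed into a constant multiple of $C_0(\xi)$, and so $nV_0 \leq n\mu \leq C_3(\widehat V_n + C_0(\xi))$ for an absolute constant $C_3$.

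The main obstacle is the self-referential appearance of $n\mu$ on the right-hand side of the concentration inequality -- inside both $\sqrt{\cdot}$ and $[\ln\ln]_+(\cdot)$ -- precisely the quantity we wish to upper-bound. The case split above is what saves the argument: the extreme slow growth of $[\ln\ln]_+$ means its ratio to $n\mu$ eventually drops below any absolute constant, allowing the square root to be absorbed into a fraction of $n\mu$. A secondary bookkeeping task is to confirm that $C_3$ is truly absolute (free of $\xi$-dependence), which is fine because all $\xi$-dependence is either already captured by $C_0(\xi)$ or appears as an additive $\ln(4/\xi)$ term that $C_0(\xi)$ dominates.
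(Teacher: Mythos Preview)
Your proposal is correct and follows essentially the same route as the paper: apply Theorem~\ref{thm:convenientunif} to the centered empirical-variance walk $\widehat V_n - n\,\E[h^2]$, use the self-bounding inequality $\var(h^2)\le \E[h^2]$ to replace the variance proxy by $n\,\E[h^2]$, and then invert with a case split on the size of $n\,\E[h^2]$. The only difference is cosmetic---the paper carries out the inversion via a dedicated quadratic lemma (Lemma~\ref{lem:iterloginv}), whereas your AM--GM absorption achieves the same end slightly more directly; your observation that $V_0 \le \E[h^2]$ is also a touch more careful than the paper, which tacitly identifies the two under $H_0$.
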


Its proof uses a self-bounding argument and is in the Appendix. Now, we can combine these to prove a novel uniform \emph{empirical} Bernstein inequality to (practically) establish concentration of $T_n$ under $H_0$. 

\begin{thm}[Uniform Empirical Bernstein Inequality for Random Walks]
\label{thm:unifempbern}
Take any $\xi > 0$. 
Then with probability $\geq 1 - \xi$, 
for all $n$ simultaneously, 
\begin{align*}
\abs{T_n} < C_0 (\xi) + \sqrt{2 \widehat V_n^{*} \left( [\ln \ln]_+ \widehat V_n^{*}  + \ln \left( \frac{4}{\xi} \right) \right)}
\end{align*}
where  $\widehat V_n^{*} := C_3 (\widehat V_n + C_0 (\xi) )$,
$C_0 (\xi) = 3 (e-2) e^2 + 2 \left( 1 + \sqrt{\frac{1}{3}} \right) \ln \lrp{\frac{8}{\xi}}$ and $C_3$ is an absolute constant.
\end{thm}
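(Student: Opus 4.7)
The plan is to combine Theorem \ref{thm:convenientunif} with Lemma \ref{lem:empbernhelp2m} by a union bound, and then substitute the (unknown) variance $nV_0$ appearing inside the concentration bound by its empirical surrogate via the monotonicity of the envelope.

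First, I would apply Theorem \ref{thm:convenientunif} at confidence level $\xi/2$ to obtain an event $\mathcal{A}$ on which, for every $n$ simultaneously,
\begin{align*}
\abs{T_n} < C_0(\xi/2) + \sqrt{2 C_1 n V_0\, [\ln\ln]_+(nV_0) + C_1 n V_0 \ln(8/\xi)},
\end{align*}
and apply Lemma \ref{lem:empbernhelp2m} at confidence level $\xi/2$ to obtain an event $\mathcal{B}$ on which, for every $n$ simultaneously, $nV_0 \le C_3(\widehat V_n + C_0(\xi/2))$. By a union bound, $\mathcal{A} \cap \mathcal{B}$ holds with probability at least $1 - \xi$.

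Next, I would observe that the envelope function $v \mapsto 2 C_1 v [\ln\ln]_+(v) + C_1 v \ln(8/\xi)$ is non-decreasing in $v \ge 0$, since $[\ln\ln]_+$ is non-decreasing and all factors are non-negative. Therefore on $\mathcal{A} \cap \mathcal{B}$, plugging the variance bound from $\mathcal{B}$ into the concentration bound from $\mathcal{A}$ gives, uniformly in $n$,
\begin{align*}
\abs{T_n} < C_0(\xi/2) + \sqrt{2 C_1 \widetilde V_n \,[\ln\ln]_+(\widetilde V_n) + C_1 \widetilde V_n \ln(8/\xi)},
\end{align*}
with $\widetilde V_n := C_3(\widehat V_n + C_0(\xi/2))$. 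Finally, I would absorb the universal factor $C_1$ into the constant $C_3$ (by redefining $C_3 \leftarrow C_1 C_3$, which remains an absolute constant) and absorb the difference $C_0(\xi/2) - C_0(\xi)$ (a fixed additive shift of $2(1+\sqrt{1/3})\ln 2$) into the definition of $C_0(\xi)$. This matches the form of $\widehat V_n^{*}$ and $C_0(\xi)$ in the statement and yields the claimed inequality.

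The argument is essentially pure bookkeeping once Lemma \ref{lem:empbernhelp2m} is in hand: the only non-routine input, the self-bounding replacement of the predictable variance $nV_0$ by the observable $\widehat V_n$ uniformly in $n$, is already encapsulated in that lemma. The main obstacle, to the extent there is one, is ensuring that all the constants introduced (from halving $\xi$, from the union bound, and from pulling $C_1$ into $C_3$) remain universal and do not depend on $n$, $\xi$, or the distributions $P,Q$; this is immediate since each step only incurs absolute multiplicative or additive constants.
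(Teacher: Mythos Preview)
Your proposal is correct and matches the paper's approach exactly: the paper states that the proof ``follows immediately from a union bound on Thm.~\ref{thm:convenientunif} and Lem.~\ref{lem:empbernhelp2m},'' and you have spelled out precisely that union bound together with the monotone substitution and constant absorption. If anything, your write-up is more detailed than the paper's one-line justification; the only cosmetic point is that the small discrepancies from halving $\xi$ (the $\ln 2$ shifts in $C_0$ and in the logarithmic term) should be absorbed into $C_3$ rather than into $C_0(\xi)$, since $C_0(\xi)$ is fixed by the statement while $C_3$ is left unspecified.
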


Its proof follows immediately from a union bound on Thm.~\ref{thm:convenientunif} and Lem.~\ref{lem:empbernhelp2m}. 
Thm.~\ref{thm:unifempbern} depends on $\widehat V_n$, 
which is easily calculated by the algorithm on the fly in constant time per iteration \cite{finch2009incremental}.
Ignoring constants for clarity, Thm.~\ref{thm:unifempbern} effectively implies that our sequential test from Figure \ref{fig:gentests} controls  type I error at $\alpha$ by setting 
\begin{align}
\label{eq:empbernqn}
q_n \propto  \ln \left( \frac{1}{\alpha} \right) + \sqrt{2 \widehat V_n \left( \ln \frac{\ln  \widehat V_n}{\alpha}  \right)}.
\end{align}
Practically, we suggest using the above threshold with a constant of $1.1$ to guarantee type-I error approximately $\alpha$ (this is all one often wants anyway, since any particular choice of $\alpha=0.05$ is anyway arbitrary). This is what we do in our experiments, with excellent success in simulations. For exact or conservative control, consider using a small constant multiple of the above threshold, such as $2$.  

The above sequential threshold is remarkable, because wrapped into the practically useful and simple expression is a deep mathematical result -- the uniform Bernstein LIL effectively involves a  union bound for the error probability over an infinite sequence of times. Any naive attempt to union bound the error probabilities for a possibly infinite sequential testing procedure will be too loose and hence too conservative -- indeed, the classical LIL is known to be asymptotically tight including constants, and our non-asymptotic LIL is also tight up to small constant factors.

This type-I error control with an implicit infinite union bound surprisingly does not lead to a loss in power. Indeed, our statistic possesses essentially the same power as the corresponding linear-time batch two sample test, and also stops early for easy problems. We make this precise in the following two subsections.

\subsection{A Linear-Time Batch Test}
\label{sec:ltbatch}

Here we study a simple linear-time batch two-sample mean test,
following the template in Fig. \ref{fig:gentests}. 
Consider the linear-time statistic 
$
\displaystyle T_N = \sum_{i=1}^{N} h_i
$, 
where, as before, $
h_i = (x_{2i-1} - y_{2i-1})^\top (x_{2i} - y_{2i}).
$
Note that the $h_i$s are also i.i.d., and $T_N$ relies on $2N$ data points from each distribution. 


Let $V_{N0},V_{N1}$ be $\var(T_N) = N \var(h)$ under $H_0,H_1$ respectively. Recalling Proposition \ref{prop:tstatmoments}: 
\begin{align*}
V_{N0} &:= N V_0 := 4 N \tr(\Sigma^2), \\
V_{N1} &:= N V_1 := N   (4 \tr(\Sigma^2) + 4 \delta^\top  \Sigma \delta ).
\end{align*}
Then since $T_N$ is a sum of i.i.d. variables, the central limit theorem (CLT) implies that (where $\xrightarrow{d}$ is convergence in distribution)
\begin{subequations}
\begin{align}\label{eq:nulldist}
\frac{T_N}{\sqrt{V_{N0}}} ~\xrightarrow{d}_{H_0}~ \N(0,1)
\\
\frac{T_N - N \|\delta\|^2}{\sqrt{V_{N1}}} ~\xrightarrow{d}_{H_1}~ \N(0,1)
\end{align}
\end{subequations}

Based on this information, our test rejects the null hypothesis whenever 
\begin{equation}\label{eq:batch-knownV}
 T_N >  \sqrt{V_{N0}} ~ z_\alpha, 
\end{equation} 
where $z_\alpha$ is the $1-\alpha$ quantile of the standard normal distribution. 
So Eq. \eqref{eq:nulldist} ensures that 
$$
P_{H_0} \left(\frac{T_N}{\sqrt{V_{N0}}} > z_\alpha\right) \leq \alpha,
$$ 
giving us type I error control under $H_0$. 


In practice, we may not know $V_{N0}$, so we standardize the statistic using the empirical variance -- 
since we assume $N$ is large, these scalar variance estimates do not change the effective power analysis. 
For non-asymptotic type I error control, we can use an empirical Bernstein inequality \cite[Thm 11]{maurer2009empirical},
based on an unbiased estimator of $V_N$. 
Specifically, the empirical variance of $h_i$s ($\widehat V_N$) can be used to reject the null whenever 
\begin{equation}\label{eq:batch-unknownV}
T_N >  \sqrt{2 \widehat V_N \ln (2/\alpha)} + \frac{7 N \ln (2/\alpha)}{3 (N-1)}.
\end{equation}
Ignoring constants for clarity, the empirical Bernstein inequality effectively suggests that the batch test from Figure~\ref{fig:gentests} will have type I error control of $\alpha$ on setting threshold
\begin{equation}
\label{eq:empbernpn}
p_N ~\propto ~ \ln\left(\frac1{\alpha}\right) + \sqrt{2\widehat V_N \ln \left(\frac1{\alpha} \right)} 
\end{equation}
For immediate comparison, we copy below the expression for $q_n$ from Eq.~\eqref{eq:empbernqn}:
\begin{align*}
q_n ~\propto~  \ln \left( \frac{1}{\alpha} \right) + \sqrt{2 \widehat V_n \left( \ln \frac{\ln  \widehat V_n}{\alpha}  \right)}.
\end{align*}
This similarity explains the optimal power and stopping time properties, detailed in the next subsection.

One might argue that if $N$ is large, then $\widehat V_N \approx V_N$, and in this case we can simply derive the (asymptotic) power of the batch test given in Eq.\eqref{eq:batch-knownV} as 
\begin{align}
&P_{H_1} \left(\frac{T_N}{\sqrt{V_{N0}}} > z_\alpha\right) \label{eq:batchpower} \\
&=
P_{H_1} \left( \frac{T_N - N\|\delta\|^2}{\sqrt{V_{N1}}} > z_\alpha \sqrt{\frac{V_{N0}}{V_{N1}}} - \frac{N\|\delta\|^2}{\sqrt{V_{N1}}} \right)\nonumber\\
&= \Phi \left( \frac{\sqrt N \|\delta\|^2}{\sqrt{8 \tr(\Sigma^2) + 8\delta^\top \Sigma\delta}} - z_\alpha \sqrt{\frac{\tr(\Sigma^2)}{\tr(\Sigma^2) + \delta^\top \Sigma\delta}} \right) \nonumber
\end{align}
Note that the second term is a constant less than $z_\alpha$. 
As a concrete example, 
when $\Sigma = \sigma^2 I$, 
and we denote the signal-to-noise ratio as $\Psi := \frac{\|\delta\|}{\sigma}$, 
then the power of the linear-time batch test is at least
$\Phi\left( \frac{\sqrt N \Psi^2}{\sqrt{8 d + 8\Psi^2}} - z_\alpha \right).$



\subsection{Power and Stopping Time of Sequential Test}


The striking similarity of Eq.~\eqref{eq:empbernpn} and Eq.~\eqref{eq:empbernqn}, mentioned in the previous subsection, is not coincidental. Indeed, both of these arise out of non-asymptotic versions of CLT-like control and LIL-like control, and we know that in the asymptotic regime for Bernoulli coin-flips, CLT thresholds and LIL threshold differ by just  $\propto \sqrt{\ln \ln n}$ factors. Hence, it is not surprising to see the empirical Bernstein LIL match empirical Bernstein thresholds up to $\propto \sqrt{\ln \ln \widehat V_n}$ factors. Since the power of the sequential test is \textit{at least} the probability of rejection at the very last step, and since $\sqrt{\ln \ln n}~<~2$ even for $n=10^{20}$, the power of the linear-time sequential and batch tests is essentially the same. However, a sequential test that rejects at the last step is of little practical interest, bringing us to the issue of early stopping.

\vspace{0.5em}
\noindent
\textbf{Early Stopping. } 
The argument is again identical to that Section \ref{sec:illexam}, proving that $\evp{\tau}{H_1}$ is nearly optimal, 
and arbitrarily close to optimal as $\beta$ tends to zero. Once more note that the ``optimal'' above refers to the performance of the oracle linear-time batch algorithm that was informed about the right number of points to subsample and use for the one-time batch test.
Formally, let $n^*_\beta(\delta)$ denote this minimum sample size for the two-sample mean testing \textit{batch} problem to achieve a power $\beta$, the $^*$ indicating that this is an oracle value, unknown to the user of the batch test. 
From Eq.~\eqref{eq:batchpower}, it is clear that for 
$
N ~\geq~  \frac{8 Tr(\Sigma^2) + 8 \delta^T \Sigma \delta}{\|\delta\|^4} (z_\beta~+~z_\alpha)^2,
$ the power becomes at least $\beta$. In other words,
\begin{equation}
\label{eq:oraclestopping}
n^*_\beta(\delta) \leq \frac{Tr(\Sigma^2) + \delta^T \Sigma \delta}{\|\delta\|^4} 8(z_\beta~+~z_\alpha)^2
\end{equation}

\begin{thm}
\label{thm:seqstopping}
Under $H_1$, the sequential algorithm of Fig.~\ref{fig:gentests} using $q_n$ from Eq.~\eqref{eq:empbernqn} has expected stopping time $\propto n^*_\beta(\delta)$.
\end{thm}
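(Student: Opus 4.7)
The plan is to parallel the proof of Theorem~\ref{thm:coinstopt}: under $H_1$, $T_n$ is a bounded-increment i.i.d.\ random walk with per-step drift $\|\delta\|^2$ and variance $V_1 = 4\tr(\Sigma^2) + 4\delta^\top \Sigma \delta$, so the argument is structurally identical to the biased-coin case, with the Hoeffding bound replaced by a Bernstein bound to exploit the (generally small) variance $V_1$. Starting from
\[ \evp{\tau}{H_1} = \sum_{n \geq 0} P_{H_1}(\tau > n) \leq \sum_{n \geq 0} P_{H_1}(T_n \leq q_n), \]
the goal is to exhibit a cutoff $n_0 = \Theta(n^*_\beta(\delta))$ above which $P_{H_1}(T_n \leq q_n)$ decays geometrically in $n / n^*_\beta(\delta)$; the tail sum then contributes $O(n^*_\beta(\delta))$ and the theorem follows.

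\textbf{Core computation.} First, show that for $n \geq n_0$ the threshold is dominated by the drift, $q_n \leq \tfrac{1}{2} n\|\delta\|^2$. Substituting the approximate concentration $\widehat V_n \approx n\,\E_{H_1}[h_i^2] = n(V_1 + \|\delta\|^4)$ into~\eqref{eq:empbernqn} gives $q_n \lesssim \sqrt{n(V_1+\|\delta\|^4)\,[\ln\ln]_+(nV_1)/\alpha}$, so the required inequality reduces to $n \gtrsim (V_1+\|\delta\|^4)/\|\delta\|^4$ modulo a doubly logarithmic factor (treated as a small constant, as in Section~\ref{sec:illexam}); by~\eqref{eq:oraclestopping}, this is $\Theta(n^*_\beta(\delta))$, which fixes $n_0$. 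Second, on $\{q_n \leq \tfrac{1}{2} n\|\delta\|^2\}$ the event $\{T_n \leq q_n\}$ is contained in $\{T_n - n\|\delta\|^2 \leq -\tfrac{1}{2} n\|\delta\|^2\}$, and Bernstein applied to the mean-zero i.i.d.\ sum with bounded increments $h_i - \|\delta\|^2$ and per-step variance $V_1$ yields $P_{H_1}(T_n \leq q_n) \leq \exp(-\Omega(n\|\delta\|^4/(V_1 + \|\delta\|^2))) = \exp(-\Omega(n / n^*_\beta(\delta)))$. Summing this geometric tail produces $\evp{\tau}{H_1} \leq n_0 + O(n^*_\beta(\delta)) = O(n^*_\beta(\delta))$, with the multiplicative constant tending to $1$ as $\beta \to 0$ exactly as in Theorem~\ref{thm:coinstopt}.

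\textbf{Main obstacle.} The only genuinely new difficulty relative to the coin-flip proof is that $q_n$ depends on the random $\widehat V_n$, so the comparison $q_n \leq \tfrac{1}{2} n\|\delta\|^2$ above is stochastic and must hold \emph{simultaneously} for every $n \geq n_0$ in the tail of the sum. I would establish this on a high-probability ``good event'' $G$ by proving a uniform-in-$n$ concentration bound for the empirical second-moment process $\sum_i (h_i^2 - \E_{H_1}[h_i^2])$, in the spirit of Lemma~\ref{lem:empbernhelp2m} (the $h_i^2 \in [0,1]$ are bounded by Cauchy--Schwarz, so Thm.~\ref{thm:convenientunif} applies to this sequence directly under $H_1$); on $G$ one obtains $\widehat V_n \leq 2n(V_1 + \|\delta\|^4)$ uniformly, which is precisely what the core computation uses. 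On $G^c$ we bound $\tau \leq N$ crudely, contributing $P(G^c)\cdot N$ to $\evp{\tau}{H_1}$, and choosing $P(G^c)$ polynomially small absorbs this into the leading $O(n^*_\beta(\delta))$ term. This uniform handling of the random threshold is the delicate new ingredient; everything else --- the geometric tail summation and tracking of the $\alpha,\beta$ dependence of constants --- is a direct analogue of the coin-flip argument.
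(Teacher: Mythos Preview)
Your proposal is correct and follows essentially the same route as the paper's own proof: bound $P_{H_1}(\tau > n) \le P_{H_1}(T_n \le q_n)$, show the drift $n\|\delta\|^2$ dominates the threshold past a cutoff $\Theta(n^*_\beta(\delta))$, apply a Bernstein/sub-Gaussian tail to the centered sum to get decay $\exp(-\Omega(n\|\delta\|^4/V_1))$, and sum the geometric tail. The one noteworthy difference is that the paper's proof simply asserts $\widehat V_n \asymp V_n$ for large $n$ and proceeds with the deterministic variance, whereas you rigorously isolate the randomness of $q_n$ via a good-event argument using uniform concentration of $\widehat V_n$; this extra care is sound and, if anything, fills a gap the paper leaves heuristic.
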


For clarity, we simplify \eqref{eq:empbernqn} and \eqref{eq:empbernpn} by dropping the initial $\ln \left( \frac{1}{\alpha} \right)$ additive term since it is soon dominated by the second term and does not qualitatively affect the conclusion.

\subsection{Discussion}
\label{sec:lttdisc}

This section's arguments have given an illustration of the flexibility and great generality of the ideas we used to test the bias of the coin. 
In the two-sample setting, we just design the statistic $T_N = \sum_{i=1}^n h_i$ to be a mean-zero random walk under the null. 
As in the coin's case, the LIL controls type I error, and the rest of the arguments are identical because of the common concentration properties of all random walks. 

Our test statistic $T_N$ is chosen with several considerations in mind. 
First, the batch test is linear-time in the sample complexity, so we are comparing algorithms with the \textit{same computational budget}, on a fair footing. 
There exist batch tests using U-statistics that have higher power than ours (\cite{powermmd2}) for a given $N$, 
but they use more computational resources ($O(N^2)$ rather than $O(N)$). 

Also, the batch statistic is a sum of random increments, a common way to write many hypothesis tests, and one that can be computed on the fly in the sequential setting. 
Note that $T_N$ is a scalar, so our arguments do not change with $d$, and we inherit the favorable high-dimensional statistical performance of the statistic; 
\cite{powermmd2} has more relevant discussion. 
The statistic also has been shown to have mighty generalizations in the recent statistics literature, which we discuss in Section \ref{sec:extensions}. 

Though we assume data scaled to have norm $\frac{1}{2}$ for convenience, this can be loosened. 
Any data with bounded norm $B > \frac{1}{2}$ can be rescaled by a factor $\frac{1}{B}$ just for the analysis, 
and then our results can be used. 
This results in an empirical Bernstein bound like Thm. \ref{thm:unifempbern}, but of order 
$O \lrp{ C_0 (\xi) + \sqrt{\widehat V_n \ln \left( \frac{\ln  (B \widehat V_n)}{\xi} \right)} }$. 
The dependence on $B$ is very weak, and is negligible even when $B = \mbox{poly}(d)$.

In fact, we only require control of the higher moments (e.g. by Bernstein conditions, which generalize boundedness and sub-Gaussianity conditions \cite{boucheron2013concentration}) 
to prove the non-asymptotic Bernstein LIL in \cite{B15}, 
exactly as is the case for the usual Bernstein concentration inequalities for averages (\cite{boucheron2013concentration}). 
Therefore, our basic arguments hold for unbounded increments $h_i$ as well. 
In fact, the LIL itself, as well as the non-asymptotic LIL bounds of \cite{B15}, apply to martingales -- 
much more general versions of random walks capable of modeling dependence on the past history. 
Our ideas could conceivably be extended to this setting to devise more data-dependent tests, which would be interesting future work.

\section{Empirical Evaluation}
\label{sec:experiments}

In this section, we evaluate our proposed sequential test on synthetic data, 
to validate the predictions made by our theory concerning its type I/II errors and the stopping time. 

We simulate data from two multivariate Gaussians ($d=10$), motivated by our discussion at the end of Section \ref{sec:ltbatch}: 
each Gaussian has covariance matrix $\Sigma = \sigma^2 I_d$, 
one has mean $\mu_1 = \textbf{0}^d$ and the other has $\mu_2 = (\delta,0,0,\dots,0) \in \mathbb{R}^d$ for some $\delta \geq 0$.
We keep $\sigma = 1$ here to keep the scale of the data roughly consistent with the biased-coin example, 
though we find the scaling of the data makes no practical difference, as we discussed. 


\subsection{Running the Test and Type I Error}
\label{sec:type1errseq}

Like typical hypothesis tests, ours is designed to control type I error. 
When implementing our algorithmic ideas, it suffices to set $q_n$ as in \eqref{eq:empbernqn}, 
where the only unknown parameter is the proportionality constant $C$. 
The theory suggests that this is an absolute constant, and prescribes an upper bound for it, 
which can conceivably be loose because of the analytic techniques used (as \cite{B15} discusses). 
On the other hand, in the asymptotic limit the bound becomes tight; 
the empirical $\widehat V_n$ converges quickly to its mean $V_n$, and we know from second-moment versions of the LIL that $C = \sqrt{2}$, and $C_0 = 0$ suffice. 
However, as we consider smaller finite times, that bound must relax (at the extremely low $t = 1$ or $2$ when flipping a fair coin, for instance). 

Nevertheless, we find that in practice, for even moderate sample sizes like the ones we test here, 
the same reasonable constants suffice in all our experiments: 
$C = \sqrt{2}$ and $C_0 = \log (\frac{1}{\alpha})$, with $C_0$ following Thm. \ref{thm:unifempbern} and similar fixed-sample Bennett bounds 
(\cite{boucheron2013concentration, B15}; also see Appendix \ref{sec:propconstdisc}). 
The situation is exactly analogous to how the Gaussian approximation is valid for even moderate sample sizes in batch testing, 
making possible a huge variety of common tests that are asymptotically and empirically correct with reasonable constants to boot. 

To be more specific, consider the null hypothesis for the example of the coin bias testing given earlier; 
these fair coin flips are the most \emph{anti-}concentrated possible bounded steps, and render our empirical Bernstein machinery ineffective, so they make a good test case. 
We choose $C$ and $C_0$ as above, and plot the cumulative probability of type I violations $\text{Pr}_{H_0}(\tau \leq n)$ up to time $n$ for different $\alpha$ (where $\tau$ is the stopping time of the test), 
with the results in Fig. \ref{fig:typeIcoin}. 
To control type I error, the curves need to be asymptotically upper-bounded by the desired $\alpha$ levels (dotted lines). 
This does not appear true for our recommended settings of $C, C_0$, but the figure still indicates that type I error is controlled even for very high $n$ with our settings. 
A slight further raise in $C$ beyond $\sqrt{2}$ suffices to guarantee much stronger control (Appendix \ref{sec:exprotocol}). 

Fig. \ref{fig:typeIcoin} also seems to be contain linear plots, which we cannot fully explain. We conjecture it is related to the standard proof of the classical LIL, 
which divides time into epochs of exponentially growing size (\cite{F50}).

For more on provable correctness with low $C$, see Appendix \ref{sec:propconstdisc}, 
or Appendix \ref{sec:exprotocol} for more empirical discussion.

\begin{figure}[ht]
\vskip -0.1in
\begin{center}
\centerline{\includegraphics[height=150pt]{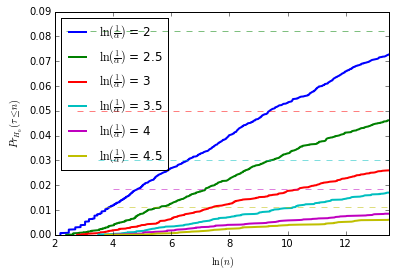}}
\end{center}
\vskip -0.1in
\caption{$\text{Pr}_{H_0}(\tau \leq n)$ for different $\alpha$, on biased coin. 
Dotted lines of corresponding colors are the target levels $\alpha$. }
\label{fig:typeIcoin}
\end{figure}

\subsection{Type II Error and Stopping Time}

Now we verify the results at the heart of the paper -- uniformity over alternatives $\delta$ of the type II error and stopping time properties. 

\begin{figure}[ht]
\vskip -0.1in
\begin{center}
\centerline{\includegraphics[height=140pt]{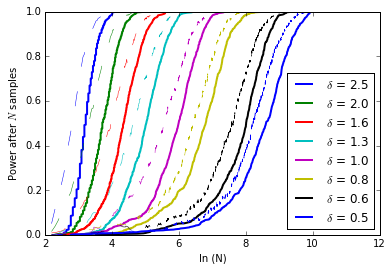}}
\end{center}
\vskip -0.1in
\caption{Power vs. $\ln (N)$ for different $\delta$, on Gaussians. Dashed lines represent power of batch test with $N$ samples.}
\label{fig:typeIIgauss}
\end{figure} 


Fig. \ref{fig:typeIIgauss} plots the power of the sequential test $P_{H_1 (\delta)} ( \tau < N )$ against the maximum runtime $N$ using the Gaussian data, 
at a range of different alternatives $\delta$; the solid and dashed lines represent the power of the batch test \eqref{eq:empbernpn} with $N$ samples, and the sequential test with maximum runtime $N$. 
As we might expect, the batch test has somewhat higher power for a given sample size, but the sequential test consistently performs well compared to it. 
The role of $N$ here is basically to set a desired tolerance for error; 
increasing $N$ does not change the intermediate updates of the algorithm, but does increase the power by potentially running the test for longer. 
So each curve in Fig. \ref{fig:typeIIgauss} transparently illustrates the statistical tradeoff inherent in hypothesis testing against a fixed simple alternative, 
but the great advantage of our sequential test is in achieving \emph{all of them simultaneously with the same algorithm}. 

To highlight this point, we examine the stopping time compared to the batch test for the Gaussian data,  
in Fig. \ref{fig:typeIIviolin}. 
We see that the distributions of $\ln (\tau)$ are all quite concentrated, 
and that their medians (marked) fit well to a slope-$4$ line, showing the predicted $\frac{1}{\delta^4}$ dependence on $\delta$. 
Some more experiments are in Appendix \ref{sec:moregraphs}. 

\begin{figure}[ht]
\vskip -0.1in
\begin{center}
\centerline{\includegraphics[height=140pt]{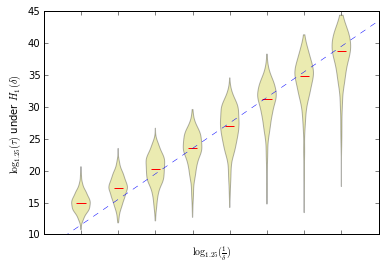}}
\end{center}
\vskip -0.1in
\caption{Distribution of $\log_{1.25} (\tau)$ for $\delta \in \{ 0.5(1.25)^c : c \in \{7,6,\dots,0 \} \}$, 
so that the abscissa values $\{ \log_{1.25} (\frac{1}{\delta}) \}$ are a unit length apart. 
Dashed line has slope $4$.}
\label{fig:typeIIviolin}
\end{figure}

\section{Further Extensions}
\label{sec:extensions}

\textbf{A General Two-Sample Test}. 
Given two independent multivariate streams of i.i.d. data, instead of testing for differences in mean, 
we could also test for differences in \textit{any} moment, i.e. differences in distribution, 
a subtler problem which may require much more data to ascertain differences in higher moments. 
In other words, we would be testing
\begin{align*}
H_0 : P=Q \text{~ versus ~} H_1 : P \neq Q 
\end{align*} 

\newcommand{\mmd}{\textsc{MMD}}

One simple way to do this is by using a \textit{kernel} two-sample test, 
like the Maximum Mean Discrepancy (MMD) test proposed by \cite{mmd}. 
The population MMD is defined as
$$
\mmd(P,Q) = \sup_{f \in H_k} \lrp{ \E_{X \sim P} f(X) - \E_{Y \sim Q} f(Y) }
$$
where $H_k$ is the unit ball of functions in the Reproducing Kernel Hilbert Space corresponding to some positive semidefinite Mercer kernel $k$.
One common choice is the Gaussian kernel $k(a,b) = \exp(-\|a-b\|^2/\gamma^2)$. 
With this choice, the population MMD has an interesting interpretation, given by Bochner's theorem \cite{rudin1987real} as
$$
\mmd =
\int_{\R^d} | \varphi_X(t) - \varphi_Y(t) |^2 e^{-\gamma^2 \|t\|^2} \mathrm{d}t
$$
where $\varphi_X(t),\varphi_Y(t)$ are the characteristic functions of $P,Q$. 
This means that the population MMD is nonzero iff the distributions differ (i.e. the alternative holds). 

The authors of \cite{mmd} propose the following (linear-time) batch test statistic after seeing $2N$ samples: 
$
\displaystyle \mmd_N = \frac{1}{N} \sum_{i=1}^{N} h_{i}
$, 
where 
$h_{i} = k(x_{2i},x_{2i+1}) + k(y_{2i},y_{2i+1}) - k(x_{2i},y_{2i+1}) - k(x_{2i+1},y_{2i})$.
The associated test is consistent against all fixed (and some local) alternatives where $P \neq Q$; see \cite{mmd} for a proof, and \cite{powermmd2} for a high-dimensional analysis of this test (in the limited setting of mean-testing that we consider earlier in this paper). Both properties are inherited by the following sequential test. 

The sequential statistic we construct after seeing $n$ batches ($2n$ samples) is the random walk
$
T_n = \sum_{i=1}^{n} h_{i},
$, 
which has mean zero under the null because $\evp{\mmd_N}{} = \evp{h_i}{} = 0$. 
The similarity with our mean-testing statistic is not coincidental; when $k(a,b)=a^\top b$, they coincide, 
further motivating our choice of test statistic $U_n$ earlier in the paper. 
As before, we use the LIL to get type I error control, nearly the same power as the linear-time batch test, 
and also early stopping much before seeing $N$ points if the problem at hand is easy.

\textbf{A General Independence Test}. Given a single multivariate stream of i.i.d data, where each datapoint is a pair $(X_i,Y_i) \in \R^{p+q}$, the independence testing problem involves testing whether $X$ is independent of $Y$ or not.  More formally, we want to test
\begin{equation}
H_0 : X \perp Y \text{~ versus ~} H_1 : X \not\perp Y ~ .
\end{equation} 
A test of linear correlation/covariance only detects linear dependence. 
As an alternative to this, \cite{dcor} proposed a population quantity called \textit{distance covariance}, given by 
\begin{align*}
\dcov &(X,Y) = \E \|X-X'\|\|Y-Y'\| + \E \|X-X'\| \E \|Y-Y'\| - 2\E \|X-X'\|\|Y-Y''\| 
\end{align*}
where $(X,Y), (X',Y'), (X'',Y'')$ are i.i.d. pairs from the joint distribution on $(X,Y)$. 
Remarkably, an alternative representation is
$$
\dcov(X,Y) = \int\limits_{\R^{p+q}} |\phi_{X,Y}(t,s) - \phi_X(t) \phi_Y(s)|^2 w(t,s) \;dt \;ds
$$
where $\phi_X,\phi_Y,\phi_{X,Y}$ are the characteristic functions of the marginals and joint distribution of $X,Y$
and $w(t,s) \propto \|t\|_p^{1+p}\|s\|_q^{1+q}$. 
Using this, the paper \cite{dcor} concludes that $\dcov(X,Y)=0$ iff $X \perp Y$.
One way to form a linear-time statistic to estimate $\dcov$ is to process the data in batches of size four, i.e.  
$
B_{i} =  \bigcup_{j=0}^3 (X_{4i+j},Y_{4i+j}),
$
and calculate the scalar
\begin{align*}
&h_{i} = \frac1{6}\sum_{\binom{4}{2}} \|X-X'\| \|Y'' - Y'''\|  + \frac1{6}\sum_{\binom{4}{2}} \|X-X'\|\|Y-Y'\| - \frac{1}{24}\sum_{4 \times 3} \|X-X'\|\|Y-Y''\|
\end{align*}
where the summations are over all possible ways of assigning $(X,Y) \neq (X',Y') \neq (X'',Y'') \neq (X''',Y''')$, each pair being one from $B_i$. The expectation of this quantity is exactly $\dcov$, and the batch test statistic, given $2N$ datapoints, is simply 
$
\dcov_N = \frac1{N} \sum_{i=1}^N h_i
$.
As before, the associated test is consistent for any fixed alternatives where $X \not \perp Y$. Noting that $\evp{\dcov_N}{} = \evp{h_i}{}=0$ under the null,
our random walk after seeing $n$ batches (i.e. $4n$ points) will just be 
$
 T_n = \sum_{i=1}^{n} h_{i} 
$. 
As in previous sections, the LIL results from \cite{B15} can be used to get type I error control, 
and early stopping much before seeing $N$ points, if the problem at hand is statistically easy.
\section{Related Work}
\label{sec:relwork}

\textbf{Parametric or asymptotic methods.} Our statements about the control of type I/II errors and stopping times are very general, 
following up on early sequential analysis work. 
Most sequential tests operate in the Wald's framework expounded in \cite{wald1945sequential}.
In a seminal line of work, Robbins and colleagues delved into sequential hypothesis testing in an asymptotic sense \cite{robbinsBook}.
Apart from being asymptotic, their tests were most often for simple hypotheses (point nulls and alternatives), were univariate, 
or parametric (assuming Gaussianity or known density). That said, two of their most relevant papers are \cite{robbinsLIL2} and \cite{robbinsLIL}, 
which discuss statistical methods related to the LIL. 
They give an asymptotic version of the argument of Section \ref{sec:illexam}, using it to design sequential Kolmogorov-Smirnov tests with power one. 
Other classic works that mention using the LIL for testing various simple or univariate or parametric problems include \cite{robbinsPNAS2,robbinsNonparam,lai77,lerche86}. 
These all operate in the asymptotic 
limit in which the classic LIL can be used to set $q_N$. 

For testing a simple null against a simple alternative, 
Wald's sequential probability ratio test (SPRT) was proved to be optimal by the seminal work \cite{wald48}, but this applies 
when both the null and alternative have a known parametric form. 
The same authors also suggested a univariate nonparametric two-sample test in \cite{waldwolfowitz40}, 
but presumably did not find it clear how to combine these two lines of work.

\vspace{1em}
\noindent
\textbf{Bernstein-based methods.} Finite-time uniform LIL-type concentration tools from \cite{B15} are crucial to our analysis, and we adapt them in new ways; 
but novelty in this respect is not our primary focus here, because less recent concentration bounds can also be used to yield similar results. 
It is always possible to use a weighted union bound (allocating failure probability $\xi$ over time as $\xi_n \propto \frac{\xi}{n^2}$) over fixed-$n$ Bernstein bounds, 
resulting in a deviation bound of $O \lrp{\sqrt{ V_n \ln \frac{n}{\xi}}}$. 
A more advanced ``peeling" argument, dividing time $n$ into exponentially growing epochs, improves the bound to $O \lrp{\sqrt{ V_n \ln \frac{\ln n}{\xi}}}$ (e.g. in \cite{JMNS13}). 
This suffices in many simple situations, but in general is still arbitrarily inferior to our bound of $O \lrp{\sqrt{ V_n \ln \ln \frac{ V_n}{\xi}}}$, 
precisely in the case $ V_n \ll n$ in which we expect the second-moment Bernstein bounds to be most useful over Hoeffding bounds. 
A yet more intricate peeling argument, demarcating the epochs by exponential intervals in $V_n$ rather than $n$, can be used to achieve our iterated-logarithm rate, 
in conjunction with the well-known second-order uniform martingale bound due to Freedman (\cite{F75}). 
This serves as a sanity check on the non-asymptotic LIL bounds of \cite{B15}, where it is also shown that these bounds have the best possible dependence on all parameters.
However, it can be verified that even a suboptimal uniform concentration rate like $O \lrp{\sqrt{ V_n \ln \frac{V_n}{\xi}}}$ would suffice for the optimal stopping time properties of the sequential test to hold, with only a slight weakening of the power.

Bernstein inequalities that only depend on empirical variance have been used for stopping algorithms in Hoeffding races \cite{loh2013faster}
and other even more general contexts \cite{mnih2008empirical}. 
This line of work uses the empirical bounds very similarly to us, albeit in the nominally different context of direct estimation of a mean. 
As such, they too require uniform concentration over time, 
but achieve it with a crude union bound (failure probability $\xi_n \propto \frac{\xi}{n^2}$), 
resulting in a deviation bound of $O \lrp{\sqrt{\widehat V_n \ln \frac{n}{\xi}}}$. 
Applying the more advanced techniques above, it may be possible to get our optimal concentration rate, but to our knowledge ours is the first work 
to derive and use uniform LIL-type empirical Bernstein bounds. 

\vspace{1em}
\noindent
\textbf{In practice.} To our knowledge, implementing sequential testing in practice has previously invariably relied upon CLT-type results patched together with heuristic adjustments of the CLT threshold 
(e.g. the well-known Haybittle-Peto scheme for clinical trials \cite{Peto77} has an arbitrary conservative choice of $q_n = 0.001$ 
through the sequential process and $q_N = 0.05 = \alpha$ at the last datapoint). 
These perform as loose functional versions of our uniform finite-sample LIL upper bound, 
though without theoretical guarantees. 
In general, it is unsound to use an asymptotically normal distribution under the null at stopping time $\tau$ -- the central limit theorem (CLT) applies to any \textit{fixed} time $t$, 
but it may not apply to a \textit{random} stopping time $\tau$ 
(see Anscombe's random-sum CLT \cite{anscombe1952large, gut2012anscombe} and related work). 
This has caused myriad practical complications in implementing such tests (see \cite{LaiZheng08}, Section 4). 
One of our contributions is to rigorously derive a directly usable finite-sample sequential test, 
in a way we believe can be generically extended. 

We emphasize that there are several advantages to our proposed framework and analysis which, taken together, are unique in the literature. 
We tackle the multivariate nonparametric (possibly even high-dimensional) setting, with composite hypotheses. 
Moreover, we not only prove that the power is asymptotically one, but also derive finite-sample rates that illuminate dependence of other parameters on $\beta$, 
by considering non-asymptotic uniform concentration over finite times. 
The fact that it is not provable via purely asymptotic arguments is why our optimal stopping property has gone unobserved for a wide range of tests, even as basic as the biased coin. 
In our more refined analysis, it can be verified (Thm. \ref{thm:coinstopt}) that the stopping time diverges to $\infty$ when the required type II error $\to 0$, i.e. power $\to 1$. 

\section{Conclusion}

We have presented a sequential scheme for multivariate nonparametric hypothesis testing against composite alternatives, 
which comes with a full finite-sample analysis in terms of on-the-fly estimable quantities. 
Its desirable properties include type I error control by considering finite-time LIL concentration; 
near-optimal type II error compared to linear-time batch tests, due to the iterated-logarithm term in the LIL; 
and most importantly, essentially optimal early stopping, uniformly over a large class of alternatives. 
We presented some simple applications in learning and statistics, 
but our design and analysis techniques are general, 
and their extensions to other settings are of continuing future interest.

\newpage

\bibliographystyle{plain}
\bibliography{UAISeqTST}

\begin{thebibliography}{10}

\bibitem{anscombe1952large}
Francis~J Anscombe.
\newblock Large-sample theory of sequential estimation.
\newblock In {\em Mathematical Proceedings of the Cambridge Philosophical
  Society}, volume~48, pages 600--607. Cambridge Univ Press, 1952.

\bibitem{B15}
Akshay Balsubramani.
\newblock Sharp uniform martingale concentration bounds.
\newblock {\em arXiv preprint arXiv:1405.2639}, 2015.

\bibitem{boucheron2013concentration}
St{\'e}phane Boucheron, G{\'a}bor Lugosi, and Pascal Massart.
\newblock {\em Concentration inequalities: A nonasymptotic theory of
  independence}.
\newblock Oxford University Press, 2013.

\bibitem{robbinsLIL}
DA~Darling and Herbert Robbins.
\newblock Iterated logarithm inequalities.
\newblock {\em Proceedings of the National Academy of Sciences of the United
  States of America}, pages 1188--1192, 1967.

\bibitem{robbinsPNAS2}
DA~Darling and Herbert Robbins.
\newblock Some further remarks on inequalities for sample sums.
\newblock {\em Proceedings of the National Academy of Sciences of the United
  States of America}, 60(4):1175, 1968.

\bibitem{robbinsNonparam}
DA~Darling and Herbert Robbins.
\newblock Some nonparametric sequential tests with power one.
\newblock {\em Proceedings of the National Academy of Sciences of the United
  States of America}, 61(3):804, 1968.

\bibitem{F50}
Vilim Feller.
\newblock {\em An Introduction to Probability Theory and Its Applications:
  Volume One}.
\newblock John Wiley \& Sons, 1950.

\bibitem{finch2009incremental}
Tony Finch.
\newblock Incremental calculation of weighted mean and variance.
\newblock {\em University of Cambridge}, 4, 2009.

\bibitem{F75}
David~A. Freedman.
\newblock On tail probabilities for martingales.
\newblock {\em Ann. Probability}, 3:100--118, 1975.

\bibitem{mmd}
A.~Gretton, K.~Borgwardt, M.~Rasch, B.~Schoelkopf, and A.~Smola.
\newblock A kernel two-sample test.
\newblock {\em Journal of Machine Learning Research}, 13:723--773, 2012.

\bibitem{gut2012anscombe}
Allan Gut.
\newblock Anscombe's theorem 60 years later.
\newblock {\em Sequential Analysis}, 31(3):368--396, 2012.

\bibitem{JMNS13}
Kevin Jamieson, Matthew Malloy, Robert Nowak, and S{\'e}bastien Bubeck.
\newblock lil'ucb: An optimal exploration algorithm for multi-armed bandits.
\newblock In {\em Conference on Learning Theory}, 2014.

\bibitem{K24}
A.~Ya. Khinchin.
\newblock \"uber einen satz der wahrscheinlichkeitsrechnung.
\newblock {\em Fundamenta Mathematicae}, 6:9--20, 1924.

\bibitem{lai77}
Tze~Leung Lai.
\newblock Power-one tests based on sample sums.
\newblock {\em The Annals of Statistics}, pages 866--880, 1977.

\bibitem{LaiZheng08}
Tze~Leung Lai, Zheng Su, et~al.
\newblock {\em Sequential nonparametrics and semiparametrics: Theory,
  implementation and applications to clinical trials}.
\newblock Institute of Mathematical Statistics, 2008.

\bibitem{lerche86}
Hans~Rudolf Lerche.
\newblock Sequential analysis and the law of the iterated logarithm.
\newblock {\em Lecture Notes-Monograph Series}, pages 40--53, 1986.

\bibitem{loh2013faster}
Po-Ling Loh and Sebastian Nowozin.
\newblock Faster hoeffding racing: Bernstein races via jackknife estimates.
\newblock In {\em Algorithmic Learning Theory}, pages 203--217. Springer, 2013.

\bibitem{maurer2009empirical}
Andreas Maurer and Massimiliano Pontil.
\newblock Empirical bernstein bounds and sample variance penalization.
\newblock {\em arXiv preprint arXiv:0907.3740}, 2009.

\bibitem{mnih2008empirical}
Volodymyr Mnih, Csaba Szepesv{\'a}ri, and Jean-Yves Audibert.
\newblock Empirical bernstein stopping.
\newblock In {\em Proceedings of the 25th international conference on Machine
  learning}, pages 672--679. ACM, 2008.

\bibitem{Peto77}
R~Peto, MC~Pike, Philip Armitage, Norman~E Breslow, DR~Cox, SV~Howard,
  N~Mantel, K~McPherson, J~Peto, and PG~Smith.
\newblock Design and analysis of randomized clinical trials requiring prolonged
  observation of each patient. ii. analysis and examples.
\newblock {\em British journal of cancer}, 35(1):1, 1977.

\bibitem{powermmd2}
Sashank~J. Reddi, Aaditya Ramdas, Barnab{\'{a}}s P{\'{o}}czos, Aarti Singh, and
  Larry Wasserman.
\newblock On the high dimensional power of a linear-time two sample test under
  mean-shift alternatives.
\newblock In {\em Proceedings of the 18th International Conference on
  Artificial Intelligence and Statistics (AISTATS 2015)}, 2015.

\bibitem{robbinsLIL2}
Herbert Robbins.
\newblock Statistical methods related to the law of the iterated logarithm.
\newblock {\em The Annals of Mathematical Statistics}, pages 1397--1409, 1970.

\bibitem{robbinsBook}
Herbert Robbins.
\newblock {\em Herbert Robbins Selected Papers}.
\newblock Springer, 1985.

\bibitem{rudin1987real}
Walter Rudin.
\newblock {\em Real and complex analysis}.
\newblock Tata McGraw-Hill Education, 1987.

\bibitem{S70}
William~F Stout.
\newblock A martingale analogue of kolmogorov's law of the iterated logarithm.
\newblock {\em Zeitschrift f{\"u}r Wahrscheinlichkeitstheorie und Verwandte
  Gebiete}, 15(4):279--290, 1970.

\bibitem{dcor}
G.J. Sz{\'e}kely, M.L. Rizzo, and N.K. Bakirov.
\newblock Measuring and testing dependence by correlation of distances.
\newblock {\em The Annals of Statistics}, 35(6):2769--2794, 2007.

\bibitem{wald1945sequential}
Abraham Wald.
\newblock Sequential tests of statistical hypotheses.
\newblock {\em The Annals of Mathematical Statistics}, 16(2):117--186, 1945.

\bibitem{waldwolfowitz40}
Abraham Wald and Jacob Wolfowitz.
\newblock On a test whether two samples are from the same population.
\newblock {\em The Annals of Mathematical Statistics}, 11(2):147--162, 1940.

\bibitem{wald48}
Abraham Wald and Jacob Wolfowitz.
\newblock Optimum character of the sequential probability ratio test.
\newblock {\em The Annals of Mathematical Statistics}, pages 326--339, 1948.

\end{thebibliography}

\newpage
\appendix

\section{Proof of Theorem \ref{thm:coinstopt}}
\begin{proof}
Write $K$ as a placeholder absolute constant in the sense of Sec. \ref{sec:illexam}. 
Then for any sufficiently high $n$, our definitions for $q_n$ and $p_n$ tell us that 
\begin{align}
P_{H_1} \left( \tau \geq n \right) 
&= P_{H_1} \left( \forall t \leq n : S_n \leq q_n \right) 
\leq P_{H_1} \left( S_{n} \leq q_{n} \right) \nonumber \\
&= P_{H_1} \left( S_{n} - n \delta \leq q_{n} - n \delta \right) \nonumber \\
\label{eq:simppwrrelation}
&\leq P_{H_1} \left( S_{n} - n \delta \leq - K n \delta  \right) \\
&\leq \beta
\end{align}
for $n \geq n^*$, from \eqref{eq:t2errappr} and the definition of $n^*$. Also, using a Hoeffding bound on \eqref{eq:simppwrrelation}, 
we see that $P_{H_1} \left( \tau \geq n \right) \leq e^{- K n \delta^2}$. So for any $\delta$ and $\beta$, 
\begin{align}
&\evp{\tau}{H_1} = \sum_{n=1}^\infty P_{H_1} \left( \tau \geq n \right)
\leq n^* + \sum_{n=n^*}^\infty P_{H_1} \left( \tau \geq n \right) \nonumber \\
&\;\leq n^* + \sum_{n=n^*}^\infty e^{- K n \delta^2} 
\label{eq:tightlogineq}
\leq n^* + \frac{\beta^K}{1 - e^{- K \delta^2}} \\
\label{eq:roughndep}
&\;\leq n^* + \beta^C \lrp{\frac{1}{K \delta^2} + 1}
\leq \lrp{ 1 + \frac{K_1 \beta^{K_2}}{\ln \frac{1}{\beta}} }n^*
\end{align}
Here \eqref{eq:tightlogineq} sums the infinite geometric series with initial term $(e^{-n^* \delta^2})^K \leq \beta^K$,   
and \eqref{eq:roughndep} uses the inequality $ \frac{1}{1 - e^{-x}} \leq \frac{1}{x} + 1 $ as well as $n_{\beta}^* (\delta) \geq \frac{K \ln \frac{1}{\beta}}{\delta^2}$.  
\end{proof}

\section{Proof of Proposition \ref{prop:tstatmoments}}

\begin{proof}[Proof of Proposition \ref{prop:tstatmoments}]
Since $x,x',y,y'$ are all independent, $\evp{h}{} = (\evp{x}{} - \evp{y}{})^\top (\evp{x'}{} - \evp{y'}{} ) = \delta^\top \delta$. Next,
\begin{align*}
\evp{h^2}{} &= \evp{ ((x - y)^\top (x' - y'))^2 }{} \\ 
&= \evp{ (x - y)^\top (x' - y')(x' - y')^\top (x - y) }{} \\
&= \evp{\tr((x-y)(x-y)^\top (x'-y')(x'-y')^\top ))}{} \\
&= \tr \left(\evp{(x-y)(x-y)^\top }{} \evp{(x'-y')(x'-y')^\top }{} \right)
\end{align*}
Since $\evp{(x-y)(x-y)^\top}{}  = \Sigma_1 + \Sigma_2 + \delta \delta^\top 
 = 2\Sigma + \delta\delta^\top $, we have
\begin{eqnarray*}
\var(h) &=& \evp{h^2}{} - (\E h)^2 = \tr[(2\Sigma + \delta \delta^\top )^2] - \|\delta\|^4\\
 &=& 4 \tr(\Sigma^2) + 4 \delta^\top  \Sigma \delta
\end{eqnarray*}
from which the result is immediate.
\end{proof}

\section{Proof of Theorem \ref{thm:convenientunif}}
\label{sec:pfconvenientunif}

We rely upon a variance-dependent form of the LIL. 
Upon noting that $\evp{T_n - T_{n-1}}{} = 0$ and $\evp{(T_n - T_{n-1})^2}{} = V_0$, 
it is an instance of a general martingale concentration inequality from \cite{B15}.

\begin{thm}[Uniform Bernstein Bound (Instantiation of \cite{B15}, Theorem 4)]
\label{thm:unifbern}
Suppose $\abs{T_n - T_{n-1}} \leq 1$ w.p. $1$ for all $n \geq 1$. 
Fix any $ \xi < 1$ and define 
$\tau_0 (\xi) = \min \left\{s : s V_0 \geq \frac{\left( 1 + \sqrt{1/3} \right)^2}{e-2} \ln \left( \frac{4}{\xi} \right) \right\}$.  
Then with probability $\geq 1 - \xi$, 
for all $n \geq \tau_0$ simultaneously, 
$\abs{T_n} \leq \frac{2 (e-2)}{\left( 1 + \sqrt{1/3} \right)} t V_0$ and 
$$ \abs{T_n} \leq \sqrt{6 (e-2) t V_0 \left( 2 \ln \ln \left( \frac{3 (e-2) e^2 t V_0 }{ \abs{T_n} } \right) + \ln \left( \frac{2}{\xi} \right) \right)} $$
\end{thm}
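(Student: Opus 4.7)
The plan is to reduce the claim directly to Theorem 4 of \cite{B15}, which provides a uniform Bernstein-type concentration inequality for real-valued martingales with bounded differences, expressed in terms of the predictable quadratic variation process $\langle T\rangle_n := \sum_{i=1}^n \evp{(T_i - T_{i-1})^2 \mid h_{1:(i-1)}}{}$. Once the hypotheses of that black-box result are checked and its variation process is identified in our notation, the statement is just a substitution.

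First I would verify the martingale hypothesis. Under $H_0$, Proposition \ref{prop:tstatmoments} gives $\evp{h_i}{} = \|\delta\|^2 = 0$, and since the data pairs $(X_j, Y_j)$ are i.i.d.\ and $h_i$ depends only on the four observations in the $i$-th batch, the natural filtration makes $T_n = \sum_{i=1}^n h_i$ a zero-mean martingale. The bounded-difference assumption $|T_n - T_{n-1}| \leq 1$ is the hypothesis of the theorem and in our setting is justified by the Cauchy--Schwarz calculation given in Section \ref{sec:seqtest} under the norm constraint $\|X\|,\|Y\| \leq 1/2$.

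Next I would compute $\langle T\rangle_n$. Because the $h_i$ are i.i.d.\ under $H_0$, one has
\[
\langle T\rangle_n = \sum_{i=1}^n \var(h_i) = n V_0,
\]
which is \emph{deterministic}. This is the key simplification: the generic statement in \cite{B15} quantifies over $n$ lying past the first time $\langle T \rangle$ crosses a threshold proportional to $\ln(4/\xi)$, and since our variation process is non-random, this first-crossing stopping time collapses to the deterministic integer $\tau_0(\xi)$ defined in the statement. Finally I would invoke Theorem 4 of \cite{B15}, which then guarantees, with probability $\geq 1-\xi$, simultaneously for all $n \geq \tau_0$ both a linear envelope $|T_n| \leq \frac{2(e-2)}{1+\sqrt{1/3}} \langle T\rangle_n$ and an iterated-logarithm envelope of the form $|T_n| \leq \sqrt{6(e-2)\, \langle T\rangle_n \bigl(2\ln\ln(3(e-2)e^2 \langle T\rangle_n / |T_n|) + \ln(2/\xi)\bigr)}$. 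Substituting $\langle T\rangle_n = nV_0$ in both yields the claim, with every absolute constant $(1+\sqrt{1/3})^2/(e-2)$, $2(e-2)/(1+\sqrt{1/3})$, $6(e-2)$, and $3(e-2)e^2$ transferring verbatim from \cite{B15}.

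There is no genuine mathematical obstacle here: the work is entirely a bookkeeping reduction, and all the difficult analysis — the peeling argument in $\langle T\rangle_n$, the control of the ratio $\langle T\rangle_n / |T_n|$ inside the $\ln\ln$, and the sharp absolute constants — is already carried out in the proof of the cited martingale inequality. The only subtlety worth checking explicitly is that the i.i.d.\ structure of $\{h_i\}$ really does make $\langle T\rangle_n$ non-random, so that $\tau_0(\xi)$ is a deterministic integer computable from $(\xi, V_0)$ alone and can legitimately appear as an index in a statement that is supposed to hold simultaneously over $n$.
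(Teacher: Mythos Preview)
Your proposal is correct and matches the paper's own treatment: the paper does not give a separate proof of this theorem but simply states it as a direct instantiation of Theorem 4 in \cite{B15}, noting only that $\evp{T_n - T_{n-1}}{} = 0$ and $\evp{(T_n - T_{n-1})^2}{} = V_0$ so that the predictable quadratic variation is the deterministic quantity $nV_0$. Your verification of the martingale and bounded-difference hypotheses and the observation that $\tau_0(\xi)$ becomes a deterministic integer are exactly the bookkeeping needed to justify this instantiation.
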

In principle this tight control by the second moment is enough to achieve our goals, 
just as the second-moment Bernstein inequality for random variables 
suffices for proving empirical Bernstein inequalities. 

However, the version we use for our empirical Bernstein bound 
is a more convenient though looser restatement of Theorem \ref{thm:unifbern}. 
To derive it, we refer to the appendices of \cite{B15} for the following result: 
\begin{lem}[\cite{B15}, Theorem 16]
\label{lem:inittime}
Take any $\xi > 0$, and define $T_n$ and $\tau_0 (\xi)$ as in Theorem \ref{thm:unifbern}. 
With probability $\geq 1 - \frac{\xi}{2}$, 
for all $n < \tau_0 (\xi)$ simultaneously, 
$$ \abs{T_{n}} \leq 2 \left( 1 + \sqrt{1/3} \right) \ln \lrp{\frac{4}{\xi}} $$
\end{lem}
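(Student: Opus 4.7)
The plan is to apply a uniform Bernstein-type (Freedman) maximal inequality to the martingale $T_n$, exploiting that on the time interval $\{n : n < \tau_0(\xi)\}$ the predictable quadratic variation $nV_0$ is bounded by $M := \frac{(1+\sqrt{1/3})^2}{e-2} \ln(4/\xi)$, directly by definition of $\tau_0(\xi)$. Since $M$ is itself only of order $\ln(1/\xi)$, we are in the small-variance regime of Bernstein's inequality, where the tail behavior is dominated by the linear-in-$\ln(1/\xi)$ term rather than the square-root term. This is exactly why the target bound is linear in $\ln(4/\xi)$ rather than $\sqrt{\ln(4/\xi)}$.

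Concretely, first I would construct the Freedman-type supermartingale $M_n^\lambda := \exp(\lambda T_n - \psi(\lambda)\, nV_0)$, where $\psi(\lambda) = (e-2)\lambda^2/(1 - \lambda/3)$ is the standard Bernstein moment-generating-function bound, valid because the increments satisfy $|T_n - T_{n-1}| \leq 1$ and have conditional second moments summing to $nV_0$. This is a nonnegative supermartingale started at $1$, so by Doob's maximal inequality, for any fixed $\lambda \in (0, 3)$,
\[
P\!\left(\exists n : \lambda T_n - \psi(\lambda)\, nV_0 \geq \ln(4/\xi)\right) \leq \xi/4.
\]
Second, I would restrict to $n < \tau_0(\xi)$, on which $nV_0 < M$ by definition of $\tau_0$. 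On the complement of the rare event above this yields $T_n \leq (\psi(\lambda) M + \ln(4/\xi))/\lambda$ simultaneously for all such $n$. Choosing $\lambda$ to minimize this upper bound, and substituting the exact form of $M$, should give $T_n \leq 2(1+\sqrt{1/3})\ln(4/\xi)$. Finally, the symmetric argument applied to $-T_n$, together with a union bound, gives the two-sided statement with total failure probability $\leq \xi/2$.

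The main obstacle is bookkeeping of constants rather than any genuine mathematical difficulty: pinning down the numerical prefactor $2(1+\sqrt{1/3})$ requires verifying that the optimal $\lambda$ in the Bernstein optimization (where the $\sqrt{1/3}$ reflects the $\lambda/3$ term in $\psi$) matches the constant that defines $M$ in $\tau_0(\xi)$, so that the two instances of $(1+\sqrt{1/3})$ combine correctly. This is presumably why the threshold in $\tau_0(\xi)$ was chosen in exactly this form in the first place — to make the short-time bound come out clean. The uniformity over all $n < \tau_0(\xi)$ comes for free from the supermartingale maximal inequality; no epoch decomposition or weighted union bound is needed in this small-variance regime, in contrast to the long-time bound of Theorem \ref{thm:unifbern}.
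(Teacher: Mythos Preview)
The paper does not actually prove this lemma; it is quoted verbatim from \cite{B15} (their Theorem 16) and used as a black box in the proof of Theorem \ref{thm:convenientunif}. So there is no in-paper proof to compare against. That said, your proposed approach is exactly the standard one and almost certainly what \cite{B15} does: build the Bennett/Freedman exponential supermartingale, apply Ville's inequality (what you call Doob's maximal inequality) to get a bound holding simultaneously for all $n$, then plug in the variance cap $nV_0 < M$ valid for $n<\tau_0(\xi)$ and optimize in $\lambda$.

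One small correction on the bookkeeping, since you flagged it as the only obstacle: the form $\psi(\lambda) = (e-2)\lambda^2/(1-\lambda/3)$ does not reproduce the stated constant exactly (you get an extra additive $\tfrac{1}{3}\ln(4/\xi)$). The clean constant comes from the simpler Bennett bound $e^\lambda - 1 - \lambda \le (e-2)\lambda^2$, valid for $\lambda\in[0,1]$, which yields the supermartingale $\exp\!\big(\lambda T_n - (e-2)\lambda^2\, nV_0\big)$. On $\{n<\tau_0(\xi)\}$ this gives
\[
T_n \;\le\; (e-2)\lambda\, M + \frac{1}{\lambda}\ln\!\frac{4}{\xi}
\;=\; \Big[(1+\sqrt{1/3})^2\,\lambda + \frac{1}{\lambda}\Big]\ln\!\frac{4}{\xi},
\]
minimized at $\lambda = (1+\sqrt{1/3})^{-1} \in (0,1)$ with value exactly $2(1+\sqrt{1/3})\ln(4/\xi)$. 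The $\sqrt{1/3}$ in the threshold defining $\tau_0(\xi)$ is thus an artifact of the constant $(e-2)$ in the Bennett MGF bound, not of the Bernstein $1/(1-\lambda/3)$ correction. With this tweak your sketch goes through as written.
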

Theorem \ref{thm:convenientunif} follows by loosely combining the above two uniform bounds.

\begin{proof}[Proof of Theorem \ref{thm:convenientunif}]
Recall $V_n := n V_0$. 
Theorem \ref{thm:unifbern} gives that w.p. $1-\frac{\xi}{2}$, for all $n \geq \tau_0 (\xi/2)$ , 
$\abs{T_n} \leq \frac{2 (e-2)}{\left( 1 + \sqrt{1/3} \right)} V_n$ and 
\begin{align}
\label{eq:unifbernmax}
\abs{T_n} \leq \max \lrp{ 3 (e-2) e^2, \sqrt{2 C_1 V_n \ln \ln V_n + C_1 V_n \ln \left( \frac{4}{\xi} \right)} }
\end{align}
Taking a union bound of \eqref{eq:unifbernmax} with Lemma \ref{lem:inittime} 
gives that w.p. $\geq 1-\xi$, 
the following is true for all $n$ simultaneously:
\begin{align*}
\abs{T_n} \leq 
\begin{cases}
2 \left( 1 + \sqrt{\frac{1}{3}} \right) \ln \lrp{\frac{8}{\xi}}  &\quad  \mbox{if } 
t < \tau_0 (\xi/2) \\
\frac{2 (e-2)}{\left( 1 + \sqrt{1/3} \right)} V_n  \quad\mbox{ and } \quad
\max \lrp{ 3 (e-2) e^2, \sqrt{2 C_1 V_n \ln \ln V_n + C_1 V_n \ln \left( \frac{4}{\xi} \right)} } &\quad  \mbox{if } 
n \geq \tau_0 (\xi/2)
\end{cases}
\end{align*}
For all $n$ we have $\abs{T_n}$ bounded by the maximum of the two cases above. 
The result can be seen to follow, by relaxing the explicit bound $\abs{T_n} \leq \frac{2 (e-2)}{\left( 1 + \sqrt{1/3} \right)} V_n$ 
to instead transform $\ln \ln$ into $[\ln \ln]_+$.
\end{proof}

%

%
%
%

\section{Proportionality Constants and Guaranteed Correctness}
\label{sec:propconstdisc}

After observing the first few samples, regardless of how many, 
it is impossible to empirically conclude with certainty that the type I error of a sequential test (Fig. \ref{fig:typeIcoin}) has ever leveled off. 
And although our theory can guarantee type I error control, 
it is reasonable to question whether our empirically recommended prescription $C = \sqrt{2}$ is actually sound, even in the hypothetical case $n \to \infty$. 

In fact, we can show that it is unsound. 
Consider first the biased coin example of Sec. \ref{sec:illexam}. 
If $S_n$ is the test statistic,  
the number of type I error violators under the null is
\begin{align*}
\sup_{n \geq 1} \frac{S_n}{\sqrt{n \ln \ln n}}
&\geq 
\inf_{k \geq 1} \sup_{n \geq k} \frac{S_n}{\sqrt{n \ln \ln n}} \\
&= 
\lim_{k \to \infty} \sup_{n \geq k} \frac{S_n}{\sqrt{n \ln \ln n}} \\
&= 
 \limsup_{n \to \infty} \frac{S_n}{\sqrt{n \ln \ln n}}
= \sqrt{2}
\end{align*}
w.p. 1, from the asymptotic LIL of Thm. \ref{thm:lilorig}. 

So the sequential test will almost surely reject with $C = 2$, which is very undesirable. 
We still recommend this for two reasons. 

Firstly, it appears not to be an empirical issue, because of $C_0$ and because of our finite $N$ needed in practice to detect the alternative. 
As evidence of this, we count type I violations under the fair-coin null (the maximally anti-concentrated stochastic process under the random walk) 
for a very large $N = 10^6$ with $C = 3, \alpha = 0.05, C_0 = \log \frac{1}{\alpha}$, repeatedly using $10^5$ Monte Carlo trials. 
We see an average of 3-5 type-I-violating sample paths (out of $10^5$) -- almost no type I error, because $C$ is relatively high. 

Secondly, it is possible to set $C > 2$ and get provable type I error control, at the cost of a somewhat higher stopping time. 
In the theory, $C$ and $C_0$ can be tightened for sufficiently high sample sizes (\cite{B15}) -- 
the reason is that for sufficiently high $n$, the order of growth of the bound is dominated by $O (\sqrt{n \ln \ln n})$, 
and all the sources of looseness in the analysis leading to our final uniform empirical Bernstein inequality (Thm. \ref{thm:unifempbern}) 
can therefore be bounded by increasing the iterated-logarithm proportionality constant. 

These ideas generalize cleanly beyond the biased-coin example to our other tests. 
Exactly the same argument as above can be used with our two-sample mean test statistic $T_n$, its variance process $V_n$, 
and the variance-based asymptotic LIL (\cite{S70}) to give, w.p. 1,  
\begin{align*}
\sup_{n \geq 1} \frac{T_n}{\sqrt{V_n \ln \ln V_n}}
&\geq \sqrt{2}
\end{align*}
So even after taking into account the convergence rate of the empirical variance $\widehat V_n \to V_n$, 
our basic conclusions and recommendations remain the same for all our tests beyond the biased-coin setting.

\subsection{Type II Error Approximation}
\label{sec:t2errapprox}

We argue that the power of the sequential test with maximum runtime $N$ is approximately lower-bounded by the power of the batch test with $N$ examples 
(e.g. in \eqref{eq:pwrapproxcoin} for the coin example). 
This argument can be made more exact. 

For the coin example, we can work with a more refined approximation via the CLT, when $N$ is high so that $N \delta > q_N > p_N$. 
Defining $p^s_N = \frac{p_N - N \delta}{\sqrt{N}}$ and $q^s_N = \frac{q_N - N \delta}{\sqrt{N}}$, 
\begin{align*}
P_{H_1 (\delta)} &\left( \exists n \leq N : S_n > q_n \right) \geq P_{H_1 (\delta)} \left( S_{N} > q_{N} \right) \\
&= P_{H_1 (\delta)} \left( S_{N} > p_{N} \right) - P_{H_1 (\delta)} \left( q_{N} > S_{N} \geq p_{N} \right) \\
&= P_{H_1 (\delta)} \left( S_{N} > p_{N} \right) - 
P_{H_1} \left( q^s_N > \frac{S_N - N \delta}{\sqrt{N}} \geq p^s_{N} \right) \\
&\approx P_{H_1 (\delta)} \left( S_{N} > p_{N} \right) - (\Phi (q^s_N) - \Phi (p^s_N))
\end{align*}
When there is an abundance of data, the sequential test would typically be run with very large $N$, 
since it would typically stop much sooner (see Appendix \ref{sec:exprotocol}). 
So this CLT approximation is in fact extremely good, and it can be made into a lower bound if necessary with a negligible $N^{-O(1)}$ deviation term. 
A similar argument can be made for our more complex tests.


\section{Proof of Lemma \ref{lem:empbernhelp2m}}

\begin{proof}
Here, $\nu_i := h_i^2 - \evp{h_i^2 }{}$ has mean zero by definition.  
It has a cumulative variance process that is self-bounding:
\begin{align}
\label{eq:selfbound}
B_n &:= \sum_{i=1}^n \evp{\nu_i^2 }{} 
= \sum_{i=1}^n \evp{\lrp{h_i^2 - \evp{h_i^2 }{}}^2 }{} \nonumber \\
&= \sum_{i=1}^n \lrp{ \evp{ h_i^4 }{} - \lrp{\evp{h_i^2 }{}}^2 }
\leq \sum_{i=1}^n \evp{ h_i^4 }{} \nonumber \\ 
&\stackrel{(a)}{\leq} \sum_{i=1}^n \evp{ h_i^2 }{} = n V_0 := A_n
\end{align}
where the last inequality $(a)$ uses that $|h_i | \leq 1$, 
and we define the process $A_n$ for convenience.

Applying Theorem \ref{thm:convenientunif} to the mean-zero random walk 
$\sum_{i=1}^n \nu_i$ gives $(1-\xi)$-a.s. for all $t$ that:
\begin{align*}
\Big| \widehat V_n - &A_n \Big| = \abs{ \sum_{i=1}^n \left( h_i^2 - \evp{h_i^2 }{} \right) } \\
&< C_0 (\xi) + \sqrt{2 C_1 B_n [\ln \ln]_+ (B_n) + C_1 B_n \ln \left( \frac{4}{\xi} \right)} \\
&\leq C_0 (\xi) + \sqrt{2 C_1 A_n [\ln \ln]_+ (A_n) + C_1 A_n \ln \left( \frac{4}{\xi} \right)}
\end{align*}
This can be relaxed to 
\begin{align}
\label{eq:vtiterlogquad}
A_n &- \sqrt{2 C_1 A_n [\ln \ln]_+ (A_n) + C_1 A_n \ln \left( \frac{4}{\xi} \right)} \nonumber \\ &- C_0 (\xi) - \widehat V_n \leq 0
\end{align}

Suppose $A_n \geq 108 \ln \left( \frac{4}{\xi} \right)$. 
Then a straightforward case analysis confirms that 
$$ A_n \geq 8 \max \lrp{ 2 C_1 [\ln \ln]_+ (A_n) , C_1 \ln \left( \frac{4}{\xi} \right) } $$
This is precisely the condition needed 
to invert \eqref{eq:vtiterlogquad} using Lemma \ref{lem:iterloginv}. 
Doing this yields that 
\begin{align}
\label{eq:tmplilthing}
\sqrt{A_n} \leq &\sqrt{2 C_1 [\ln \ln]_+ \lrp{2 C_0 (\xi) + 2 \widehat V_n} + C_1 \ln \left( \frac{4}{\xi} \right)} \nonumber \\ 
&+ \sqrt{C_0 (\xi) + \widehat V_n}
\end{align}
For sufficiently high $\widehat V_n$ ($O \left(\ln \left( \frac{4}{\xi} \right) \right)$ suffices), 
the first term on the right-hand side of \eqref{eq:tmplilthing} is bounded as 
$ \sqrt{2 C_1 [\ln \ln]_+ \lrp{2 C_0 (\xi) + 2 \widehat V_n} + C_1 \ln \left( \frac{4}{\xi} \right)} \leq \sqrt{4 C_1 [\ln \ln]_+ \lrp{2 C_0 (\xi) + 2 \widehat V_n} } 
\leq \sqrt{8 C_1 \lrp{C_0 (\xi) + \widehat V_n} }$. 
Resubstituting into \eqref{eq:tmplilthing} and squaring both sides yields the result. 
It remains to check the case $A_n \leq 108 \ln \left( \frac{4}{\xi} \right)$. 
But this bound clearly holds in the statement of the result, so the proof is finished.
\end{proof}

The following lemma is useful to invert inequalities involving the iterated logarithm.
\begin{lem}
\label{lem:iterloginv}
Suppose $b_1, b_2, c$ are positive constants, 
$x \geq 8 \max( b_1 [\ln \ln]_+ (x) , b_2 )$, and 
\begin{align}
\label{eq:givenassumpinv}
x - \sqrt{b_1 x [\ln \ln]_+ (x) + b_2 x} - c \leq 0
\end{align}
Then 
\begin{align*}
\sqrt{x} \leq \sqrt{b_1 [\ln \ln]_+ (2 c) + b_2} + \sqrt{c}
\end{align*}
\end{lem}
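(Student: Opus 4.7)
The plan is to treat the given inequality \eqref{eq:givenassumpinv} as a quadratic in $u := \sqrt{x}$, exploit the self-bounding hypothesis $x \geq 8\max(b_1[\ln\ln]_+(x), b_2)$ to derive an \emph{a priori} upper bound $x \leq 2c$, and then invoke monotonicity of $[\ln\ln]_+$ to freeze the iterated-logarithm term on the right-hand side at the value $2c$. Throughout I will abbreviate $f(x) := b_1[\ln\ln]_+(x) + b_2$, so that the hypothesis reads $u^2 - u\sqrt{f(x)} - c \leq 0$.

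First I will rearrange \eqref{eq:givenassumpinv} into $x - c \leq \sqrt{x\, f(x)}$. The self-bounding assumption implies $x \geq 4(b_1[\ln\ln]_+(x) + b_2) = 4f(x)$ (since $8\max(a,b) \geq 4(a+b)$), so $f(x) \leq x/4$; inserting this inside the square root gives $\sqrt{x f(x)} \leq \sqrt{x \cdot x/4} = x/2$. Combining, $x - c \leq x/2$, hence $x \leq 2c$. Monotonicity of $[\ln\ln]_+$ then yields $f(x) \leq f(2c) = b_1[\ln\ln]_+(2c) + b_2$. Finally, solving the quadratic $u^2 - u\sqrt{f(x)} - c \leq 0$ in $u = \sqrt{x} \geq 0$ via the usual formula and the subadditivity $\sqrt{a+b} \leq \sqrt{a} + \sqrt{b}$,
\[
\sqrt{x} \leq \frac{\sqrt{f(x)} + \sqrt{f(x) + 4c}}{2} \leq \sqrt{f(x)} + \sqrt{c} \leq \sqrt{f(2c)} + \sqrt{c},
\]
which is exactly the desired conclusion.

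The one slightly subtle point — and the main obstacle — is noticing that substituting $f(x) \leq x/4$ \emph{inside} the square root in the form $x - c \leq \sqrt{xf(x)}$ yields the sharp bound $x \leq 2c$, rather than the weaker $x \leq 4c$ one would obtain by first passing to $\sqrt{x} \leq \sqrt{f(x)} + \sqrt{c}$ and squaring. Getting the tight constant $2c$ is precisely what allows the conclusion to be stated in terms of $[\ln\ln]_+(2c)$ rather than $[\ln\ln]_+(4c)$ on the right-hand side; everything else in the argument is routine algebra.
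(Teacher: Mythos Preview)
Your proof is correct and follows essentially the same route as the paper: use the self-bounding hypothesis to get $f(x)\le x/4$ (the paper derives this via a slightly more roundabout algebraic chain, while your use of $8\max(a,b)\ge 4(a+b)$ is cleaner), conclude $x\le 2c$ from \eqref{eq:givenassumpinv}, freeze the iterated logarithm at $2c$ by monotonicity, and solve the resulting quadratic in $\sqrt{x}$ with the subadditivity of $\sqrt{\cdot}$. The only cosmetic difference is that the paper substitutes $[\ln\ln]_+(2c)$ before solving the quadratic whereas you solve first and substitute after; the arguments are otherwise identical.
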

\begin{proof}
Suppose $x \geq 8 \max( b_1 [\ln \ln]_+ (x) , b_2 )$. 
Since $x \geq 8 b_2$, we have
\begin{align*}
0 \leq \frac{x}{8} - b_2 \leq \frac{x}{4} - b_1 \lrp{ \frac{x}{8 b_1} } - b_2 
\quad \implies \quad \\
0 \leq \frac{x^2}{4} - b_1 x \lrp{ \frac{x}{8 b_1} } - b_2 x
\end{align*}
Substituting the assumption $\frac{x}{8 b_1} \geq [\ln \ln]_+ (x)$ gives 
\begin{align*}
0 \leq \frac{x^2}{4} - b_1 x [\ln \ln]_+ (x) - b_2 x 
\quad \implies \quad \\
\sqrt{ b_1 x [\ln \ln]_+ (x) + b_2 x } \leq \frac{1}{2} x
\end{align*}
Substituting this into \eqref{eq:givenassumpinv} gives $x \leq 2c$. 
Therefore, again using \eqref{eq:givenassumpinv}, 
\begin{align*}
0 &\geq x - \sqrt{b_1 x [\ln \ln]_+ (x) + b_2 x} - c \\
&\geq x - \sqrt{b_1 x [\ln \ln]_+ (2 c) + b_2 x} - c 
\end{align*}
This is now a quadratic in $\sqrt{x}$. 
Solving it (using $\sqrt{x} \geq 0$) gives 
\begin{align*}
\sqrt{x} &\leq \frac{1}{2} \lrp{ \sqrt{b_1 [\ln \ln]_+ (2 c) + b_2} + \sqrt{b_1 [\ln \ln]_+ (2 c) + b_2 + 4c} } \\
&\leq \sqrt{b_1 [\ln \ln]_+ (2 c) + b_2} + \sqrt{c}
\end{align*}
using the subadditivity of $\sqrt{\cdot}$.
\end{proof}

\section{Proof of Theorem~\ref{thm:seqstopping}}

In this proof we use $\prec, \preceq, \succ, \succeq, \asymp$ to denote $<, \leq, > , \geq, =$ when ignoring constants. 
Let us first bound $P(\tau > n)$ for sufficiently large $n$ such that  $\widehat V_{n} \asymp V_{n}$ and that $\sqrt{2 \log(1/\alpha) \ln \ln n} \ll n \|\delta\|^2$:
\begin{align*}
P_{H_1}(\tau > n) ~&=~ 1 - P_{H_1}(\tau \leq n) = 1 - P_{H_1}(\exists n \leq n : T_n > q_n)\\
~&\leq~ 1 - P_{H_1}(T_{n} > q_{n})
\end{align*}
We have $q_n \asymp p_n \sqrt{\ln \ln \widehat V_n} \preceq p_n \sqrt{\ln \ln n}$; recalling $p_n \asymp \sqrt{2 V_n \log(1/\alpha)}$ from Eq.\eqref{eq:empbernpn}, we get
\begin{align*}
P_{H_1}(\tau > n) ~&\preceq~ 1 - P_{H_1}\left(T_{n} \succ \sqrt{2V_{n} \log(1/\alpha)} \sqrt{\ln \ln n} \right)\\
&=  P_{H_1}\left(\frac{T_{n} - n\|\delta\|^2}{\sqrt{ V_{n}}} \succ \frac{\sqrt{2 \log(1/\alpha) \ln \ln n} - n \|\delta\|^2}{\sqrt{V_{n}}} \right)
\end{align*}
The above expression then corresponds to a tail inequality for the centered standardized random variable on the LHS, which is a sum of bounded random variables, and hence standard sub-Gaussian inequalities yield
\begin{align}
P_{H_1}(\tau > n) ~&\preceq~ \exp(-n^2 \|\delta\|^4 / V_{n}) \nonumber \\
\label{eq:alterrbd}
~&=~ \exp\left(-n \frac{\|\delta\|^4}{8Tr(\Sigma^2) + 8\delta^T \Sigma \delta} \right)
\end{align}
Recalling from Eq.\eqref{eq:oraclestopping} that 
$$
n^*_\beta \asymp \frac{8Tr(\Sigma^2) + 8\delta^T \Sigma \delta}{\|\delta\|^4} (z_\beta + z_\alpha)^2,
$$ we infer from this and \eqref{eq:alterrbd} that $P_{H_1}(\tau > n^*_\beta)$ is a small constant bounded away from 1.

Since $\tau = \sum_{n} \mathbb{I}(\tau > n)$, we have by summing a geometric series that
\begin{align*}
\evp{\tau}{H_1} &= \sum_{n \geq 1} P_{H_1} (\tau > n)\\
&\leq n^*_\beta +  \sum_{n \geq n^*_\beta} P_{H_1} (\tau > n)\\
&\preceq  n^*_\beta + \frac{P_{H_1}(\tau > n^*_\beta)}{1 - \exp\left(- \frac{\|\delta\|^4}{8Tr(\Sigma^2) + 8\delta^T \Sigma \delta} \right)}
\end{align*}
Using the inequality $1-\exp(-x) \leq x$, i.e. $\exp(-x) \geq 1- x$, and substituting for $n^*_\beta$, we get
\begin{align*}
\evp{\tau}{H_1}  &\leq n^*_\beta +  \frac{8Tr(\Sigma^2) + 8\delta^T \Sigma \delta}{\|\delta\|^4} P_{H_1}(\tau > n^*_\beta)\\
&\asymp n^*_\beta + \frac{n^*_\beta}{(z_\beta + z_\alpha)^2}  P_{H_1}(\tau > n^*_\beta)\\
&= (1 + O(1)) n^*_\beta
\end{align*}

\section{Experimental Protocol}
\label{sec:exprotocol}

This section contains some notes on the experiments. 

The graphs of Fig. \ref{fig:typeIcoin} are each generated by 10,000 Monte Carlo trials. 
The remaining graphs all use $\alpha = 0.05$. 

The graphs of Fig. \ref{fig:typeIIgauss} are each generated by 1,000 Monte Carlo trials on the data, 
and the solid lines are the resulting stopping time distribution of the sequential test. 
As for the dashed lines, the true power of the batch test is also estimated by 1,000 Monte Carlo trials. 
Note that these experiments are run with $N = 50000$, even though the tests always seem to stop much sooner because the $\delta$ are sufficiently high. 
When data are abundant enough to detect any discernible difference between the samples, we suggest setting $N$ very high, as this gives better power. 

The graphs of Fig. \ref{fig:typeIIviolin} are each generated by 1,000 Monte Carlo trials. 

Evaluating the dependence on dimensionality $d$ is outside our scope in this paper. 
The high-dimensional properties of our statistic are further evaluated and discussed in \cite{powermmd2}, which shows 
that it is possible to achieve better high-dimensional power with fewer samples than our test statistic.  
But our standout contribution is sequential, and we focus on these aspects. 


\subsection{Supplemental Graphs}
\label{sec:moregraphs}

\subsubsection*{Type I Error}

In Figure \ref{fig:typeIcoin}, we see that the cumulative type I error rate is increasing, not leveling off. 
To change this, the proportionality constant on the iterated logarithm $C$ must be increased. 
The result of $C = 2.2$ is plotted to the right of Figure \ref{fig:atypeIcoin}, with the $C = 2$ random walk of Figure \ref{fig:typeIcoin} at left on the same scale for comparison. 
We see that just a slight increase in $C$ lowers type I violations significantly; 
at every $\alpha$, the type I error is less than half of the desired tolerance. 
Extrapolating the linear graphs, we predict that type I error will be controlled up to the huge sample size $\sim e^{25} \approx 7.2 \times 10^{10}$ for every $\alpha$,  
and further increases in $C$ make it infeasible to run for long enough to break type I error control. 
This gives some empirical validation for our recommendations. 

\begin{figure}[ht]
\vskip -0.1in
\begin{center}
\centerline{\includegraphics[width=\columnwidth,height=140pt]{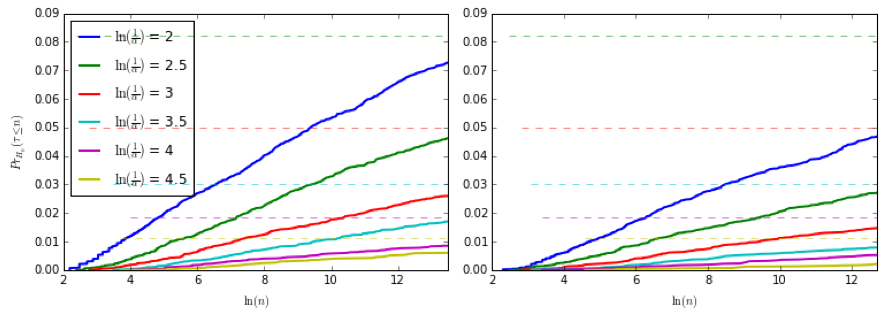}}
\end{center}
\vskip -0.1in
\caption{$\text{Pr}_{H_0}(\tau \leq n)$ for different $\alpha$, on biased coin, for $C = 2$ (left) and $C = 2.2$ (right). }
\label{fig:atypeIcoin}
\end{figure}

We can also look at the stopping time under the null with our simulated Gaussians.
these show better empirical concentration than the coin, unsurprisingly; so we do not graph them here.

\subsubsection*{Type II Error}

For completeness, we give the equivalents to Figs. \ref{fig:typeIIgauss} and \ref{fig:typeIIviolin} here, 
as Figs. \ref{fig:typeIIcoingauss} and \ref{fig:typeIIcoinviolin} respectively. 
Note that the dependence of $\tau$ on $\delta$ in Fig. \ref{fig:typeIIcoinviolin} is $O (\frac{1}{\delta^2})$, as the theory for the coin predicts. 

\newpage
\begin{figure}[ht]
\vskip -0.1in
\begin{center}
\centerline{\includegraphics[height=140pt]{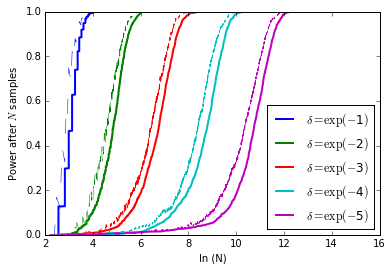}}
\end{center}
\vskip -0.1in
\caption{Power vs. $\ln (N)$ for different $\delta$, on Gaussians. Dashed lines represent power of batch test with $N$ samples.}
\label{fig:typeIIcoingauss}
\end{figure}

\begin{figure}[ht]
\vskip -0.1in
\begin{center}
\centerline{\includegraphics[height=140pt]{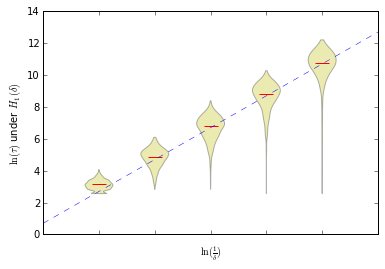}}
\end{center}
\vskip -0.1in
\caption{Distribution of $\ln (\tau)$ for $\delta \in \{ e^{-1} : c \in \{1,2,\dots,5 \} \}$, 
so that the abscissa values $\{ \ln (\frac{1}{\delta}) \}$ are a unit length apart. 
Dashed line has slope $2$.}
\label{fig:typeIIcoinviolin}
\end{figure}

\end{document}